\documentclass{article}

\usepackage{microtype}
\usepackage{graphicx}
\usepackage{subcaption}
\usepackage{booktabs} 
\usepackage{duckuments}

\usepackage{hyperref}

\def\im{\mathbf{m}}
\makeatletter
\newcommand{\twocoloronecol}[2]{%
  \if@twocolumn #1\else #2\fi
}
\makeatother

\usepackage[accepted]{icml2026}

\usepackage{amsmath}
\usepackage{amssymb}
\usepackage{mathtools}
\usepackage{amsthm}
\usepackage{thm-restate}
\usepackage{enumitem}
\usepackage{adjustbox}
\usepackage{placeins}

\usepackage[capitalize,noabbrev]{cleveref}

\theoremstyle{plain}
\newtheorem{theorem}{Theorem}[section]
\newtheorem{proposition}[theorem]{Proposition}
\newtheorem{lemma}[theorem]{Lemma}
\newtheorem{corollary}[theorem]{Corollary}
\theoremstyle{definition}
\newtheorem{definition}[theorem]{Definition}

\theoremstyle{remark}

\definecolor{juniper}{HTML}{264653}   \definecolor{bonebg}{HTML}{f7f3e9}    
\definecolor{accentgreen}{HTML}{2a9d8f}
\usepackage[most]{tcolorbox}
\newtcolorbox{propbox}{
    enhanced,
    colback=accentgreen!4!bonebg!60!white,      
    colframe=accentgreen, 
    boxrule=0pt,
    leftrule=5pt,      
    arc=0pt,
    left=8pt, right=8pt, top=2pt, bottom=2pt,
    fonttitle=\bfseries\sffamily,
    coltitle=juniper
}

\usepackage[textsize=tiny]{todonotes}

\definecolor{xblue}{HTML}{4169E1}

\icmltitlerunning{Discrete Tilt Matching}

\begin{document}

\twocolumn[
  \icmltitle{Discrete Tilt Matching}



  \icmlsetsymbol{equal}{*}

  \begin{icmlauthorlist}
    \icmlauthor{Yuyuan Chen}{equal,yyy}
    \icmlauthor{Shiyi Wang}{equal,yyy}
    \icmlauthor{Peter Potaptchik}{yyy,comp}
    \icmlauthor{Jaeyeon Kim}{yyy}
    \icmlauthor{Michael S. Albergo}{yyy,sch,kem}
  \end{icmlauthorlist}

  \icmlaffiliation{yyy}{Harvard University}
  \icmlaffiliation{comp}{University of Oxford}
  \icmlaffiliation{sch}{IAIFI}
  \icmlaffiliation{kem}{Kempner Institute}
  \icmlcorrespondingauthor{Yuyuan Chen}{yuyuanchen@math.harvard.edu}
  \icmlcorrespondingauthor{Shiyi Wang}{fwang@math.harvard.edu}

  \icmlkeywords{Machine Learning, ICML}

  \vskip 0.3in
]



\printAffiliationsAndNotice{\textsuperscript{*}Equal contribution, alphabetical order.}  

\begin{abstract}

Masked diffusion large language models (dLLMs) are a promising alternative to autoregressive generation. While reinforcement learning (RL) methods have recently been adapted to dLLM fine-tuning, their objectives typically depend on sequence-level marginal likelihoods, which are intractable for masked diffusion models. To address this, we derive Discrete Tilt Matching (DTM), a likelihood-free method that recasts dLLM fine-tuning as state-level matching of local unmasking posteriors under reward tilting. DTM takes the form of a weighted cross-entropy objective with explicit minimizer, and admits control variates that improve training stability. On a synthetic maze-planning task, we analyze how DTM’s annealing schedule and control variates affect training stability and prevent mode collapse. At scale, fine-tuning LLaDA-8B-Instruct with DTM yields strong gains on Sudoku and Countdown while remaining competitive on MATH500 and GSM8K. Our code is available \href{https://github.com/yuyuanchen0/dtm}{here}.
\end{abstract}

\section{Introduction}
Fine-tuning large language models (LLMs) is critical for aligning models with downstream objectives and user preferences. For autoregressive LLMs, a mature set of reinforcement learning (RL) and preference-optimization methods has emerged~\citep{schulman2017proximalpolicyoptimizationalgorithms,ouyang2022traininglanguagemodelsfollow,rafailov2024directpreferenceoptimizationlanguage,shao2024deepseekmath}.
%
%
Meanwhile, masked diffusion large language models (dLLMs)~\citep{nie2025llada,xie2025dream,gong2025diffucoder,bie2025llada20scalingdiffusionlanguage,fan2026stablediffcoderpushingfrontiercode} have become a promising alternative to autoregressive models, enabling parallel and flexible any-order inference, potentially offering new avenues for modeling unconventional modalities such as structured inputs \citep{chang2022mask,sahoo2024simple,shi2025,campbell2024dfm,gong2025diffucoder}.

In light of this, a large body of recent studies has adopted RL post-training to dLLMs and reported empirical gains~\citep{zhao2025d1,wang2025spg,tang2025wd1,zhu2025llada15,yang2025mmada,rojas2025gdpo}. However, these methods come with a fundamental, structural complication: the objectives depend on the model’s \emph{marginal likelihood} for a complete sequence, which is intractable for dLLMs. This is because a single terminal sequence can be obtained through various decoding orders; the likelihood thus entails aggregating over an exponentially large set of unmasking trajectories. This makes exact likelihood evaluation impractical and necessitates biased likelihood surrogates or high-variance estimators.

\begin{figure*}[ht]
    \centering
    \includegraphics[width=1.0\linewidth]{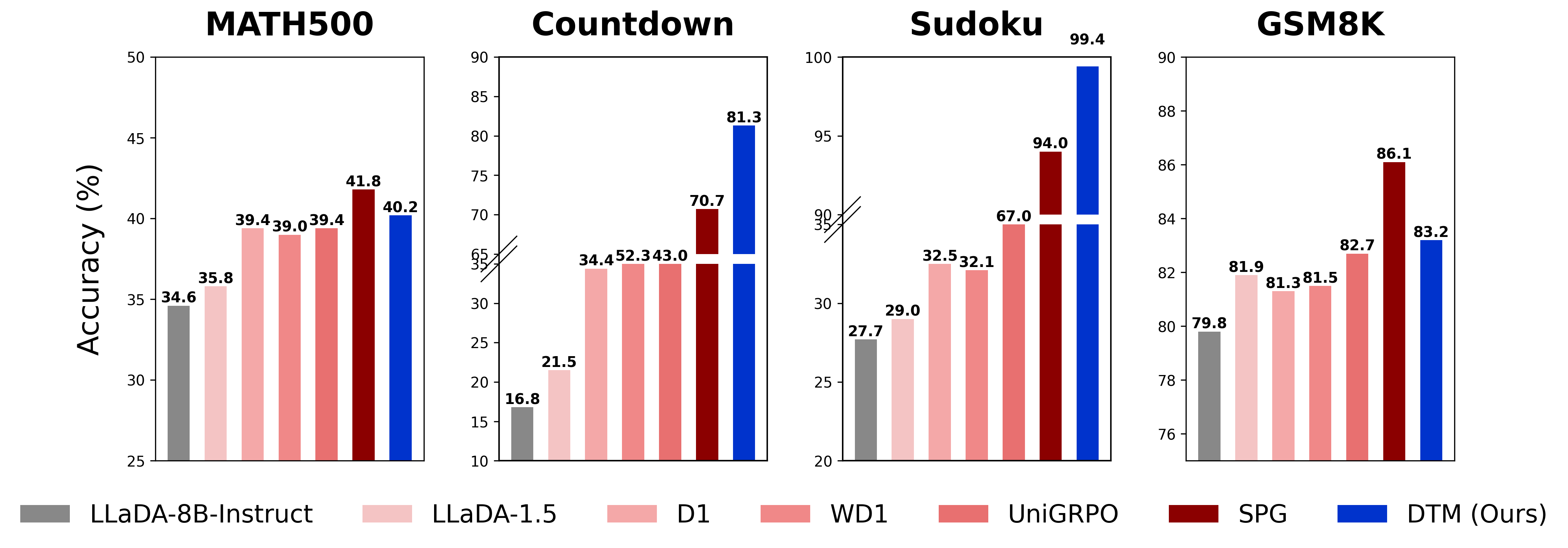}
    \caption{Evaluation accuracy of DTM and baseline methods on benchmarks. For each method, we pick the best accuracy between generation length 256 and 512.}
    \label{fig:bar}
\end{figure*}
This suggests that post-training for dLLMs should be formulated in terms of tractable \emph{state-level} quantities—namely, local unmasking posteriors on partially masked states—rather than \emph{sequence-level} objectives based on marginal likelihood.
In this work, we introduce \emph{Discrete Tilt Matching (DTM)}, a likelihood-free fine-tuning method for dLLMs that recasts post-training as \emph{state-level posterior matching under reward tilting}, rather than as LLM-style policy optimization~\cite{shao2024deepseekmath}. Specifically, given a pretrained dLLM with target distribution $\rho_1$ and a reward function $r(\cdot)$, we define the tilted target
\begin{equation*}
    \rho_{1,A}(x)\propto \rho_1(x)e^{A r(x)},
\end{equation*}
where $A>0$ is a scaling factor that controls the strength of the tilt. Rather than targeting $\rho_{1,A}$ in a single step, we follow recent work in continuous diffusion~\citep{potaptchik2025tiltmatchingscalablesampling,guo2025pdns} and progressively anneal a tilt parameter $a\in[0,A]$, performing incremental updates from $\rho_{1,a}$ to $\rho_{1,a+h}$.

We extend this incremental tilting perspective to discrete-space continuous-time Markov chains and derive DTM as a cross-entropy fine-tuning objective for dLLMs. The minimizer of the objective is \emph{explicitly characterized} by the Markov chain (with rate matrix) corresponding to the reward-tilted target, enabling updates from $a$ to $a+h$ \emph{without requiring sequence-level likelihood evaluation}. Our formulation also admits control variates that preserve the minimizer while reducing gradient variance, improving training stability.

Empirically, we show that DTM remains competitive with, and often outperforms, prior RL-based fine-tuning efforts~\citep{zhao2025d1,tang2025wd1,wang2025spg,rojas2025gdpo,yang2025mmada} on LLaDA-8B-Instruct~\citep{nie2025llada} across Sudoku, Countdown, MATH500, and GSM8K.

We highlight our \textbf{main contributions}:
\begin{propbox}
\begin{itemize}[leftmargin=0.2em]
    \item \textbf{Derivation of the DTM Algorithm:} We introduce Discrete Tilt Matching, a likelihood-free fine-tuning method for dLLMs that recasts post-training as state-level matching of unmasking posteriors. We derive a cross-entropy objective whose explicit minimizer is the reward-tilted posterior and which admits variance reduction via control variates.
    \item \textbf{Fine-Tuning LLaDA-8B-Instruct:} We validate DTM at scale by applying it to LLaDA-8B-Instruct, where it achieves particularly strong results on structured planning tasks such as Sudoku and Countdown while remaining competitive on mathematical reasoning benchmarks.
\end{itemize}
\end{propbox}

\textbf{Organization.}
In Section~\ref{sec:background}, we provide an overview of the continuous-time Markov chain formulation of MDMs, as well as how to phrase reward fine-tuning in this context. From there, in Section~\ref{sec:DTM}, we describe Discrete Tilt Matching (DTM), a likelihood-free fine-tuning method for dLLMs. In Section~\ref{sec:exp}, we provide empirical results on using DTM to fine-tune LLaDA-8B-Instruct~\cite{nie2025llada}.

\section{Background} \label{sec:background}
We review masked diffusion models (MDMs) as continuous-time Markov chains (CTMCs) on discrete sequences, and formalize fine-tuning as reward-tilting in this setting.

\textbf{Notation.} Let $\mathcal{V}$ be a vocabulary set and let $\im$ denote the special mask token. We denote $\mathcal{X}=(\mathcal{V}\cup\{\im\})^L$ for the space of length-$L$ sequences. For $x\in\mathcal{X}$, $x^i$ indicates the token at position $i\in[L]$. We use $\rho_t$ for the marginal distribution of a time-indexed stochastic process $X_t$.
\subsection{Continuous-time Markov Chains: Dynamical Transport on Discrete State Space}
A continuous-time markov chain (CTMC) $\{X_t\}_{t\in[0,1]}$ on $\mathcal{X}$ is a stochastic process with an associated family of rate matrices $\{R_t(x,y)\}_{t\in[0,1]}$ capturing the instantaneous transition likelihood from a state $x$ to a state $y$, with
\begin{equation}
\label{eq:integ}
    \mathbb{P}[X_{t+\Delta t}=y\mid X_t=x]=\mathbf{1}_{x=y}+\Delta t\cdot R_t(x,y)+o(\Delta t)
\end{equation}
for small $\Delta t>0$. For $x\ne y$, $R_t(x,y)\geq0$, and $R_t(x,x)=\sum_{y\ne x}-R_t(x,y)$. Intuitively, this captures the law of total probability.

The marginal distribution $\rho_t$ of $X_t$ evolves by the Kolmogorov forward equation \begin{equation}\label{eq:KFE}
    \partial_t\rho_t(x)=\sum_{y\in\mathcal{X}}\rho_t(y)R_t(y,x).
\end{equation}
With generative modeling, given a prior distribution $\rho_0$ and target distribution $\rho_1$, the goal is to learn the rate matrices $\{R_t\}_{t\in[0,1]}$ such that \eqref{eq:KFE} is satisfied with the correct marginals $\rho_{t=0}=\rho_0$ and $\rho_{t=1}=\rho_1$.

\subsection{Masked Diffusion Models}\label{subsec:MDM}
Let $\rho_0$ be a point mass at the fully masked length-$L$ sequence $(\im,\cdots,\im)$. MDMs \cite{chang2022mask,sahoo2024simple,shi2025,campbell2024dfm} instantiate a CTMC that transports this simple prior $\rho_0$ to a data distribution $\rho_1$ on clean sequences of length $L$. The evolution from $\rho_0$ to $\rho_1$ can thus be interpreted as gradual unmasking.

\textbf{Discrete stochastic interpolant.}
To construct this transport between $\rho_0$ and $\rho_1$, one can use discrete stochastic interpolant, \cite{gat2024discrete,shaul2025flowmatching,kim2025flexmdm} which characterizes the intermediate distributions $\{\rho_t\}_{t\in(0,1)}$ in law. Let $\alpha:[0,1]\to [0,1]$ be a smooth, monotone schedule with $\alpha(0)=0$ and $\alpha(1)=1$. By sampling a clean $x_1\sim \rho_1$, and for each $i\in[L]$ independently drawing an unmasking time $T^i$ with probability $\mathbb P[T^i<t]=\alpha(t)$ (or equivalently with density $T^i \sim \dot{\alpha}(t)dt$), we define the partially masked state $x_t$ by \begin{equation}\label{eq:discrete stochastic interpolant}
    x_t^i:=\begin{cases}
        x_1^i, & t\geq T^i\\
        \im, & t < T^i
    \end{cases},
\end{equation}
Let $\rho_t:=\mathrm{Law}(x_t)$.

\textbf{From the interpolant to the CTMC rate matrices.} 
For the masking interpolant \eqref{eq:discrete stochastic interpolant}, the only transitions over an infinitesimal interval $[t,t+dt]$ are \emph{single-coordinate reveals}: from a state $x$ we may unmask one masked position $i$ and move to $x[x^i\leftarrow v]$ for some $v\in\mathcal V$.
Moreover, since each reveal time $T^i$ satisfies $\mathbb P[T^i<t]=\alpha(t)$, the instantaneous likelihood that a still-masked coordinate reveals in $[t,t+dt]$ is the hazard rate $\lambda(t)dt$ where \begin{equation*}\label{eq:lambda(t)}
    \lambda(t)=\frac{\dot\alpha(t)}{1-\alpha(t)}.
\end{equation*}
Then the marginal law $\{\rho_t\}_{t\in[0,1]}$ induced by the interpolant \eqref{eq:discrete stochastic interpolant} can be realized by a CTMC whose rate matrices are \begin{equation}\label{eq:MDM rate matrices}
    R_t(x,x[x^i\!\leftarrow\! v])=\lambda(t)\cdot\mathbb P[x_1^i=v\mid x_t=x],
\end{equation}
i.e. $\lambda(t)$ is the instantaneous unmask event rate at the still-masked position $i$, and multiplying by conditional probability allocates that event to the specific token $v$.

\textbf{Unmasking posterior.} 
From \eqref{eq:MDM rate matrices}, the key object needed for generation is the \textit{unmasking posterior} \begin{equation*}\label{eq:unmasking posterior}
    \pi^*(v\mid x_t,i):=\mathbb P[x_1^i=v\mid x_t],
\end{equation*}
i.e., the distribution of the clean token at a masked position $i$ given the current partially-masked sequence $x_t$. A neural network $\pi_\theta$ \footnote{Slight abuse of notation: although we write $\pi_\theta(\cdot\mid x_t,i)$ as a conditional distribution over $\mathcal V$, in implementation the neural network $\pi_\theta$ takes in the partially masked sequence $x_t$ and outputs an $L$-by-$|\mathcal V|$ matrix of logits; $\pi_\theta(\cdot\mid x_t,i)$ corresponds to the $i$-th row of this matrix after softmax.} approximates $\pi^*(v\mid x_t,i)$ and is trained by minimizing the ELBO-based loss \begin{equation}\label{eq:MDM loss}
    \resizebox{\linewidth}{!}{$\displaystyle\mathcal{L}_{\text{MDM}}(\theta)=-\int_0^1\lambda(t)\mathbb E\Big[\sum_{i\in[L]:\,x_t^i=\im}\log \pi_\theta(x_1^i\mid x_t,i)\Big]dt$}
\end{equation}
where the expectation is taken over $x_1\sim \rho_1$ and $x_t\sim\rho_t(\cdot\mid x_1)$. The unique minimizer is the ground-truth unmasking posterior $\pi^*$. More precisely, if $\rho_1^\theta$ is the terminal distribution induced by the parametric model, then \begin{equation}\label{eq:MDM loss bounds KL}
    \mathrm{KL}(\rho_1\|\rho_1^\theta)\leq\mathcal{L}_{\text{MDM}}(\theta)-\min_{\theta'}\mathcal{L}_{\text{MDM}}(\theta').
\end{equation}
Appendix C of \citet{kim2025flexmdm} provides a more comprehensive overview of the framework of MDMs through the lens of discrete stochastic interpolants.

\textbf{MDM inference and the intractability challenge.}
At inference time, given a learned unmasking posterior $\pi_\theta(\cdot\mid x_t,i)$, a simple sampling procedure using \eqref{eq:integ} discretizes the given CTMC into $N$ steps, starting from the fully-masked sequence, and iteratively selects $\lfloor L/N \rfloor$ masked positions and reveals them according to the prediction $\pi_\theta(\cdot\mid x_t,i)$ until all tokens are unmasked.

This inference procedure makes the terminal marginal likelihood $\rho_\theta(x_1)$ computationally intractable, as a single terminal sequence $x_1$ can be generated via exponentially many unmasking trajectories, i.e., orders of revealing tokens, and $\rho_\theta(x_1)$ marginalizes over all such trajectories. As a result, likelihood-based reinforcement learning objectives such as GRPO, which require computing $\log \rho_\theta(x_1)$, are not directly tractable.

Therefore, rather than adopting a likelihood-based RL viewpoint, we recast fine-tuning as \emph{reward tilting} of the terminal distribution. As we will see, this perspective, combined with the structure of MDMs, yields \emph{tractable objectives} that depend only on local unmasking posteriors, providing the framework for what we call \emph{Discrete Tilt Matching}.





\subsection{Fine-tuning as Reward-tilting}\label{subsec:reward-tilting}
Let $r:\mathcal{V}^L\to\mathbb R$ be a reward function and $A>0$ be a constant. Given a pre-trained MDM with unmasking posterior $\pi_0(\cdot\mid x_t,i)$ that samples from the data distribution $\rho_1$, we can frame the goal of fine-tuning as training a tilted unmasking posterior $\pi_A(\cdot\mid x_t,i)$ whose induced terminal marginal $\rho_{1,A}$ is given by \begin{equation}\label{eq:tilted density}
    \rho_{1,A}(x)\propto\rho_1(x)e^{Ar(x)}.
\end{equation}

\textbf{Annealing from $a=0$ to $a=A$.}
Directly re-weighting samples under $\rho_1$ to be samples under $\rho_1(x)e^{Ar(x)}$ on which to perform supervised fine-tuning is usually an untenable strategy, as the weights have high variance and cause mode collapse~\citep{guo2025pdns}. An alternative strategy is to anneal toward the terminal target tilted distribution with a parameter $a\in[0,A]$, i.e. to set an intermediate target
\begin{equation}\label{eq:intermediate density}
    \rho_{1,a}(x)\ \propto\ \rho_1(x)e^{ar(x)}.
\end{equation}
Recent works \cite{potaptchik2025tiltmatchingscalablesampling, guo2025pdns} have found it useful to phrase fine-tuning in this light, not only because it allows one to consider optimization procedures via a sequence of local subproblems whose minimizers remain close, but also to enable new objective functions that arise from dynamical relations between the different $\rho_{t,a}$.



To sample from the tilted distribution \eqref{eq:intermediate density}, a natural approach is to keep the same masked-diffusion CTMC structure as in \eqref{eq:MDM rate matrices}, and simply replace the base unmasking posterior $\mathbb P[x_1^i=v\mid x_t]$ with one conditional on a new interpolant $x_{t,a}$. 
In light of this, we define the \emph{tilted interpolant} by first drawing $x_1\sim\rho_{1,a}$ and then applying the
same schedule: \begin{equation}\label{eq:tilted_interpolant_a}
x^{i}_{t,a}:=\begin{cases}
x^{i}_{1}, & t\ge T^i,\\
\im, & t<T^i,
\end{cases}
\quad T^i\sim \dot\alpha(t)\,dt,\ \ x_{1}\sim\rho_{1,a}
\end{equation}
The induced marginals $\{\rho_{t,a}\}_{t\in[0,1]}$ can then be realized by the CTMC with rate matrices \begin{equation*}
    R_{t,a}(x,x[x^i\!\leftarrow\! v])=\lambda(t)\cdot\mathbb P_a[x_1^i=v\mid x_t]
\end{equation*}
where $\mathbb P_a$ denotes the tilted distribution induced by \eqref{eq:tilted_interpolant_a}. 

\textbf{Problem formulation.} Using this notation, we cast fine-tuning as an incremental problem: given at \emph{tilt level} $a$, we aim to fine-tune it to match the next tilted distribution at level $a+h$, where $h>0$ is a small annealing step size.

Fix $0 \leq a < A$. Suppose we have an unmasking posterior $\pi_a(\cdot\mid x_t,i)$ whose induced terminal marginal is $\rho_{1,a}$. Our objective is to train a further \textit{tilted unmasking posterior}
\begin{equation*}\label{eq:pi_{a+h}}
\pi_\theta(\cdot\mid x_t,i)\approx\pi_{a+h}(v\mid x_t,i)=\mathbb P_{a+h}[x_1^i=v\mid x_t]
\end{equation*}
whose induced terminal marginal satisfies
\begin{equation*}\label{eq tilted law}
\rho_{1,a+h}(x)\ \propto\ \rho_{1,a}(x)e^{hr(x)}\ \propto\ \rho_1(x)e^{(a+h)r(x)}.
\end{equation*}

\section{Discrete Tilt Matching}\label{sec:DTM}
Section~\ref{subsec:reward-tilting} reduces fine-tuning under reward-tilting to the \emph{incremental tilting problem}: given $\pi_a$ sampling $\rho_{1,a}$, train $\pi_\theta\approx\pi_{a+h}$ to sample from $\rho_{1,a+h}$. In this section, we derive Discrete Tilt Matching (DTM), a tractable objective for carrying out this update without requiring likelihood evaluation.

\textbf{The ideal objective.} \label{subsec:ideal loss}
If we had access to tilted samples $x_{1}\sim\rho_{1,a+h}$ and could evaluate the ground-truth posterior $\pi_{a+h}$, then the most direct approach would be the standard MDM objective with respect to the tilted unmasking posterior, i.e. minimize the cross-entropy on masked positions
\begin{equation}\label{eq:ideal loss}
\min_\theta \mathbb{E}_{a+h}\Big[
\sum_{i:\,x_t^i=\im}
\mathrm{CE}\big(\pi_{a+h}(\cdot\mid x_t,i)\|\pi_\theta(\cdot\mid x_t,i)\big)\Big],
\end{equation}
where $\mathrm{CE}(\cdot\|\cdot)$ denotes the cross-entropy. The above equation, however, is intractable, as we cannot sample from $x_1\sim\rho_{1,a+h}$, i.e., $\pi_{a+h}(\cdot\mid x_t,i)$ is inaccessible.


\subsection{Esscher Transform: Tilting as Conditional Reweighting}\label{subsec:esscher}
The idea then is to express the unknown tilt $a+h$ in terms of quantities under the base tilt $a$ to which we do have access. The path measure $\mathbb P_{a+h}$ of the tilted MDM is defined via $x_1\sim \rho_{1,a+h}$ and then sampling $x_t$ via the same masking construction in \eqref{eq:tilted_interpolant_a}. Crucially, by construction of the interpolant in \eqref{eq:tilted_interpolant_a}, conditional on $x_1$, the masking process—and hence the conditional law of $x_t$ given $x_1$—is independent of the tilt level. The tilt changes only the marginal law of $x_1$ from $\rho_{1,a}$ to $\rho_{1,a+h}$.
Since $\rho_{1,a+h}(x)\propto \rho_{1,a}(x)e^{h r(x)}$, it follows that the corresponding path measures are related by the Radon--Nikodym derivative \begin{equation}\label{eq:RN derivative}
\frac{d\mathbb P_{a+h}}{d\mathbb P_a}=\frac{e^{hr(x_1)}}{\mathbb{E}_a[e^{hr(x_1)}]}.
\end{equation}
Consequently, for any integrable functional $\varphi$ of the path,
\begin{equation}\label{eq:unconditional change of measure}
\mathbb{E}_{a+h}[\varphi]=\frac{\mathbb{E}_a\big[e^{hr(x_1)}\varphi\big]}{\mathbb{E}_a\big[e^{hr(x_1)}\big]},
\end{equation}
which gives a way to rewrite the previously intractable expectation $\mathbb E_{a+h}$ in \eqref{eq:ideal loss} as tractable importance-weighted expectations $\mathbb E_a$. Similarly, we have the conditional change-of-measure identity: for any $t\in[0,1]$,
\begin{equation}\label{eq:conditional change of measure}
\mathbb{E}_{a+h}[G \mid x_t]=\frac{\mathbb{E}_a[Ge^{hr(x_1)}\mid x_t]}{\mathbb{E}_a[e^{hr(x_1)}\mid x_t]}.
\end{equation}
Applying \eqref{eq:conditional change of measure} with $G=\mathbf{1}\{x_1^i=v\}$, the left hand side of \eqref{eq:conditional change of measure} becomes precisely the target unmasking posterior\[\mathbb{E}_{a+h}[\mathbf{1}\{x_1^i=v\}\mid x_t]=\pi_{a+h}(v\mid x_t,i)\]
Estimating this ratio directly can be high-variance. However, clearing denominators yields the following equivalent
characterization of the tilted posterior:
\begin{restatable}{proposition}{Esscher}
\label{prop:esscher}
Fix $t\in[0,1]$, partially masked $x\in\mathcal{X}$, and position $i$ with $x^i=\im$. Then $\pi_{a+h}(\cdot\mid x,i)$ is the unique probability simplex on $\mathcal{V}$ such that
\twocoloronecol
{%
  \begin{equation}\label{eq:characterization of pi_{a+h}}
\resizebox{\linewidth}{!}{$\displaystyle\mathbb{E}_a[e^{hr(x_1)}\pi_{a+h}(v\mid x,i)\mid x_t]=\mathbb{E}_a[e^{hr(x_1)}\mathbf{1}\{x_1^i=v\}\mid x_t]$}
\end{equation}
}
{%
  \begin{equation}\label{eq:characterization of pi_{a+h}}
\mathbb{E}_a[e^{hr(x_1)}\pi_{a+h}(v\mid x,i)\mid x_t]=\mathbb{E}_a[e^{hr(x_1)}\mathbf{1}\{x_1^i=v\}\mid x_t]
\end{equation}
}
for every $v\in\mathcal{V}$.
\end{restatable}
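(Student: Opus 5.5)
The plan is to read the statement with $x_t = x$ fixed. Since we condition on $x_t=x$, the quantity $\pi_{a+h}(v\mid x,i)$ is a deterministic function of the conditioning variable, i.e.\ it is $\sigma(x_t)$-measurable. It can therefore be pulled out of the conditional expectation on the left-hand side of \eqref{eq:characterization of pi_{a+h}}, giving
\[
\pi_{a+h}(v\mid x,i)\,\mathbb{E}_a[e^{hr(x_1)}\mid x_t=x]=\mathbb{E}_a[e^{hr(x_1)}\mathbf{1}\{x_1^i=v\}\mid x_t=x].
\]
So the characterization is, for each $v$, a linear equation in the single unknown $p(v)$:
\[
p(v)\,Z(x)=N_v(x),\qquad Z(x):=\mathbb{E}_a[e^{hr(x_1)}\mid x_t=x],\quad N_v(x):=\mathbb{E}_a[e^{hr(x_1)}\mathbf{1}\{x_1^i=v\}\mid x_t=x].
\]

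\textbf{Existence.} I will invoke the conditional change-of-measure identity \eqref{eq:conditional change of measure} with $G=\mathbf{1}\{x_1^i=v\}$. As noted just before the proposition, this gives $\pi_{a+h}(v\mid x,i)=N_v(x)/Z(x)$. Multiplying through by $Z(x)$ shows that $\pi_{a+h}(\cdot\mid x,i)$ satisfies the displayed system for every $v$. It is also a probability vector: summing $N_v$ over $v\in\mathcal V$ yields $Z$, because $x_1^i\in\mathcal V$ almost surely.

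For completeness I would also justify \eqref{eq:conditional change of measure} in one line. Since $\mathcal X$ is finite, the Radon--Nikodym relation \eqref{eq:RN derivative} gives $\mathbb P_{a+h}(x_1,x_t)\propto\mathbb P_a(x_1,x_t)e^{hr(x_1)}$. Conditioning on $x_t=x$ and normalizing, the global normalizer cancels.

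\textbf{Uniqueness.} Suppose $q$ is any vector on $\mathcal V$ satisfying $q(v)Z(x)=N_v(x)$ for all $v$. Then $q(v)=N_v(x)/Z(x)=\pi_{a+h}(v\mid x,i)$, provided $Z(x)>0$.

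\textbf{Main obstacle.} The only delicate point is this nondegeneracy, $Z(x)>0$. Because $e^{hr}>0$ and $r$ takes finitely many values on the finite set $\mathcal V^L$, $Z(x)$ is strictly positive exactly when $\mathbb P_a(x_t=x)>0$. So I will restrict to states $x$ reachable under $\mathbb P_a$, which is implicit anyway since $\pi_{a+h}(\cdot\mid x,i)$ is only defined there.

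Two further remarks:
\begin{itemize}
\item Reachability is the same under $\mathbb P_a$ and $\mathbb P_{a+h}$, since the two measures are mutually absolutely continuous by \eqref{eq:RN derivative}.
\item Uniqueness does not even require imposing the simplex constraint, because the equations already pin down every coordinate. The simplex property is simply consistent with the solution.
\end{itemize}
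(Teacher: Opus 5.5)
Your proposal is correct and follows essentially the same route as the paper: the paper obtains this characterization by applying the conditional change-of-measure identity \eqref{eq:conditional change of measure} with $G=\mathbf{1}\{x_1^i=v\}$ and then clearing the denominator $\mathbb{E}_a[e^{hr(x_1)}\mid x_t]$. Your explicit check that this denominator is strictly positive on states reachable under $\mathbb{P}_a$ is what makes uniqueness go through, and the paper leaves that step implicit.
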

This is the key step toward making the ideal objective \eqref{eq:ideal loss} tractable: it characterizes the unknown $\pi_{a+h}$ through known conditional weighted moments under $\mathbb P_a$.

\textbf{Remark.} The conditional reweighting identities \eqref{eq:RN derivative}--\eqref{eq:conditional change of measure} are a special case of the classical \emph{Esscher transform} (i.e.\ exponential tilting) and, more generally, of a conditional tilting principle for arbitrary random targets and losses. For completeness, Appendix~\ref{app:general esscher} presents a general tilted regression framework that recovers our setting by a particular choice of variables and Bregman divergence.

\subsection{Moving beyond the Ideal World: DTM as a Variance-reduced Projection}\label{subsec:dtm objective}
Using \eqref{eq:unconditional change of measure} and \eqref{eq:characterization of pi_{a+h}}, we now turn \eqref{eq:ideal loss} into a tractable objective using only samples from $\mathbb P_a$ and evaluations of $\pi_a$.

\textbf{Fixing the expectation.}
By \eqref{eq:unconditional change of measure}, any $\mathbb{E}_{a+h}[\cdot]$ appearing in the ideal objective can be rewritten as an importance-weighted $\mathbb{E}_a[\cdot]$. Concretely, we can replace samples $x_1\sim\rho_{1,a+h}$ in \eqref{eq:ideal loss} by rollouts from $\mathbb P_a$ and weight each $x_1\sim\rho_{1,a}$ by $e^{hr(x_1)}$. 

\textbf{Fixing the target without evaluating $\pi_{a+h}$.}
The remaining issue is the target distribution inside the cross-entropy.
Proposition~\ref{prop:esscher} shows that the weighted target moments we need are those of the one-hot indicator $\mathbf{1}\{x_1^i=v\}$, which is typically high variance. This motivates introducing a \emph{random target distribution} $T_c(\cdot\mid x_t,i,x_1)$ (parameterized by a control parameter $c$) that is conditionally unbiased in the same weighted sense:
\begin{equation}\label{eq:unbiased target}
\resizebox{\linewidth}{!}{$\displaystyle\mathbb{E}_a[e^{hr(x_1)}T_c(v\mid x_t,i,x_1)\mid x_t]=\mathbb{E}_a[e^{hr(x_1)}\mathbf{1}\{x_1^i=v\}\mid x_t]$}
\end{equation}
for all $v\in\mathcal{V}$. Intuitively, $T_c$ replaces the raw one-hot target with a lower-variance random simplex while preserving the same weighted conditional mean as in \eqref{eq:characterization of pi_{a+h}}. One concrete choice of $T_c$ is given below in \eqref{eq:c-DTM target}.

\textbf{The $c$-DTM objective.}
Given \eqref{eq:unbiased target}, we arrive at the (variance-controlled) Discrete Tilt Matching objective:
\begin{restatable}[$c$-DTM objective]{proposition}{objective}\label{prop:objective}
    Assume \eqref{eq:unbiased target} holds for all $(t,x,i)$. Then the unique minimizer of \twocoloronecol{
    \begin{equation}\label{eq:c-DTM objective}
\begin{aligned}
\mathcal L^{c\text{-DTM}}_{a\to a+h}(\theta)
&=\int_0^1\lambda(t)\,
\mathbb E_a\Big[\sum_{i\in[L],\ x_{t}^i=\im}e^{hr(x_{1})} \\
&\cdot \mathrm{CE}\big(T_c(\cdot\mid x_{t},i, x_1)\|\pi_\theta(\cdot\mid x_{t},i)\big)\Big]\,dt .
\end{aligned}
\end{equation}
    }{
    \begin{equation}\label{eq:c-DTM objective}
\mathcal L^{c\text{-DTM}}_{a\to a+h}(\theta)=\int_0^1\frac{\dot\alpha(t)}{1-\alpha(t)}\,
\mathbb E_a\Big[\sum_{i\in[L],\ x_{t}^i=\im}e^{hr(x_{1})}\cdot \mathrm{CE}\big(T_c(\cdot\mid x_{t},i, x_1)\|\pi_\theta(\cdot\mid x_{t},i)\big)\Big]\,dt .
\end{equation}
    }
    is the ground-truth tilted unmasking posterior $\pi_{a+h}$, i.e.
\[
\pi_{\theta^*}(\cdot\mid x_t,i)=\pi_{a+h}(\cdot\mid x_t,i).
\]
\end{restatable}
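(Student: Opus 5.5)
The strategy is to reduce the $c$-DTM objective, state by state, to a weighted cross-entropy between the true tilted posterior and $\pi_\theta$. First I will use the tower property to condition on $x_t$. Then I will apply the unbiasedness condition \eqref{eq:unbiased target}, which together with Proposition~\ref{prop:esscher} replaces the random target $T_c$ by $\pi_{a+h}$. Finally, Gibbs' inequality gives the unique minimizer pointwise.

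\textbf{Step 1: linearity of cross-entropy in the target.} For fixed $(t,x_t,i)$ with $x_t^i=\im$, write
\[
\mathrm{CE}\big(T_c(\cdot\mid x_t,i,x_1)\,\|\,\pi_\theta(\cdot\mid x_t,i)\big)=-\sum_{v\in\mathcal V}T_c(v\mid x_t,i,x_1)\log\pi_\theta(v\mid x_t,i).
\]
The factor $\log\pi_\theta(v\mid x_t,i)$ is $x_t$-measurable, so conditioning the inner expectation on $x_t$ gives
\[
\mathbb E_a\big[e^{hr(x_1)}\mathrm{CE}(T_c\|\pi_\theta)\mid x_t\big]=-\sum_v \mathbb E_a\big[e^{hr(x_1)}T_c(v\mid x_t,i,x_1)\mid x_t\big]\log\pi_\theta(v\mid x_t,i).
\]
The set of masked positions is also $x_t$-measurable, so the sum over $i$ can be taken outside the conditional expectation.

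\textbf{Step 2: replace the target.} By \eqref{eq:unbiased target} and then \eqref{eq:characterization of pi_{a+h}}, the conditional weighted moment of $T_c$ equals
\[
\mathbb E_a[e^{hr(x_1)}\mathbf 1\{x_1^i=v\}\mid x_t]=Z(x_t)\,\pi_{a+h}(v\mid x_t,i),\qquad Z(x_t):=\mathbb E_a[e^{hr(x_1)}\mid x_t]>0.
\]
Here I use that $\pi_{a+h}(v\mid x_t,i)$ is $x_t$-measurable, so it can be pulled out of the conditional expectation in \eqref{eq:characterization of pi_{a+h}}. The loss therefore becomes
\[
\mathcal L^{c\text{-DTM}}_{a\to a+h}(\theta)=\int_0^1\lambda(t)\,\mathbb E_a\Big[Z(x_t)\sum_{i:\,x_t^i=\im}\mathrm{CE}\big(\pi_{a+h}(\cdot\mid x_t,i)\,\|\,\pi_\theta(\cdot\mid x_t,i)\big)\Big]dt.
\]
As a remark, $\mathbb E_a[Z(x_t)\,\cdot\,]$ equals $\mathbb E_a[e^{hr(x_1)}]\,\mathbb E_{a+h}[\,\cdot\,]$ by \eqref{eq:unconditional change of measure}. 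This shows the loss is a positive multiple of the ideal objective \eqref{eq:ideal loss}, integrated against $\lambda(t)\,dt$.

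\textbf{Step 3: pointwise minimization.} Decompose $\mathrm{CE}(p\|q)=H(p)+\mathrm{KL}(p\|q)$. The entropy term does not depend on $\theta$. The remaining term is a nonnegatively weighted integral of $\mathrm{KL}(\pi_{a+h}\|\pi_\theta)$, which vanishes exactly when $\pi_\theta(\cdot\mid x_t,i)=\pi_{a+h}(\cdot\mid x_t,i)$ at every $(t,x_t,i)$ charged by the weight. The weight $\lambda(t)Z(x_t)$ is strictly positive wherever $\rho_{t,a}(x_t)>0$. Under the usual assumption that the model class is expressive enough to realize $\pi_{a+h}$, the minimizer is therefore $\pi_{\theta^*}=\pi_{a+h}$. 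It is unique as a function on the support of $\rho_{t,a}$, which coincides with the support of $\rho_{t,a+h}$ because the tilt weight $e^{hr}$ is positive.

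\textbf{Main obstacle.} The delicate point is the rigor of ``unique'': it holds only up to null sets. Since the state space is finite, I expect this to be harmless once uniqueness is stated on the support of $\rho_{t,a}$ for almost every $t$. A secondary technical point is integrability, since $\lambda(t)$ blows up as $t\to1$; I would handle this by assuming finiteness of the minimal loss, as is implicit in \eqref{eq:MDM loss}.
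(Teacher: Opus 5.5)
Your proposal is correct and follows essentially the same route as the paper: condition on $x_t$, pull $\log\pi_\theta$ out of the conditional expectation, use \eqref{eq:unbiased target} with the conditional Esscher identity to replace $T_c$ by $Z(x_t)\,\pi_{a+h}$, then minimize pointwise in $(t,x_t,i)$ in the nonparametric setting. Your $\mathrm{CE}=H+\mathrm{KL}$ / Gibbs argument for uniqueness is, if anything, cleaner than the paper's appeal to strict convexity of $\mathrm{CE}(p\|\cdot)$, and your caveat that uniqueness holds only on the support of $\rho_{t,a}$ (at times with $\lambda(t)>0$) is a correct refinement the paper leaves implicit.
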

\begin{proof}[Proof sketch]
Fix $(t,x,i)$ and consider minimizing the conditional objective $\mathbb{E}_a[e^{hr(x_1)}\mathrm{CE}(T_c\|\pi)\mid x_t]$ over $\pi$ on the simplex over $\mathcal{V}$. By convexity of cross-entropy, the minimizer is the normalized conditional mean of $e^{hr(x_1)}T_c(\cdot)$ given $x_t$.
Using \eqref{eq:unbiased target} and Proposition~\ref{prop:esscher}, this normalized mean is precisely $\pi_{a+h}(\cdot\mid x_t,i)$.
\end{proof}

\textbf{A concrete unbiased target with control variate.} One concrete target that satisfies \eqref{eq:unbiased target} is \begin{equation}\label{eq:c-DTM target}
\resizebox{\linewidth}{!}{$\displaystyle T_c(v\mid x_t,i,x_1):=\frac{c\pi_a(v\mid x_t,i)+(e^{hr(x_{1})}-c)\mathbf{1}\{v=x^i_{1}\}}{e^{hr(x_{1})}}$},
\end{equation}
where in most generality $c$ can be any $\sigma(x_t)$-measurable function. It recovers the simple one-hot target $\mathbf{1}\{v=x^i_{1}\}$ when $c=0$. By construction, we check that \begin{equation*}
\resizebox{\columnwidth}{!}{$
\begin{aligned}
    &\mathbb{E}_a[e^{h r(x_1)}T_c(v\mid x_t,i,x_1)\mid x_t]\\
    =\ &\mathbb{E}_a[c\,\pi_a(v\mid x_t,i)+(e^{hr(x_1)}-c)\mathbf{1}\{x_1^i=v\}\mid x_t]\\
    =\ &\mathbb{E}_a[e^{hr(x_1)}\mathbf{1}\{x_1^i=v\}\mid x_t]
\end{aligned}
$}
\end{equation*}
since by definition $\pi_a(v\mid x_t,i)=\mathbb{E}_a[\mathbf{1}\{x_1^i=v\}\mid x_t]$. This establishes \eqref{eq:unbiased target}, and concludes our derivation of the $c$-DTM loss function as a principled and tractable objective with guaranteed unique minimizer and requiring no likelihood computations.

In practice, with sufficiently small $h$, it would be convenient to choose $c=1$:
\begin{restatable}{proposition}{variance}\label{prop:variance comparison}
    For sufficiently small $h>0$, \begin{equation}\label{eq:variance comparison}
        \mathrm{Var}[\nabla\mathcal L^{\text{0-DTM}}_{a\to a+h}(\theta)]\geq\mathrm{Var}[\nabla\mathcal L^{\text{1-DTM}}_{a\to a+h}(\theta)].
    \end{equation}
\end{restatable}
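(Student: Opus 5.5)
We compare the two stochastic gradients per sample, i.e. per draw of $(t,x_1,x_t)$ under $\mathbb P_a$ and each masked position $i$. Write $\ell_\theta(v):=\log \pi_\theta(v\mid x_t,i)$ and $g(v):=\nabla_\theta \ell_\theta(v)$.

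\textbf{The two single-sample gradients.} Since $e^{hr(x_1)}T_c(\cdot)=c\,\pi_a(\cdot\mid x_t,i)+(e^{hr(x_1)}-c)\mathbf{1}\{\cdot=x_1^i\}$, the weighted cross-entropy is linear in $c$. The single-sample gradient is therefore
\[
G_c=-\lambda(t)\sum_{i:\,x_t^i=\im}\Big(e^{hr(x_1)}g(x_1^i)+c\big(\bar g_i-g(x_1^i)\big)\Big),
\]
where $\bar g_i:=\sum_v\pi_a(v\mid x_t,i)g(v)$. This gives $G_0=-\lambda\sum_i e^{hr}g(x_1^i)$ and $G_1=-\lambda\sum_i\big((e^{hr}-1)g(x_1^i)+\bar g_i\big)$.

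\textbf{Means agree.} Conditioning on $x_t$ gives $\mathbb E_a[g(x_1^i)\mid x_t]=\bar g_i$, so $\mathbb E_a[G_0]=\mathbb E_a[G_1]$. This is the statement that the control variate leaves the mean unchanged. We interpret the variance in the proposition as the trace of the covariance, so it suffices to compare second moments, $\mathbb E\|G_0\|^2$ versus $\mathbb E\|G_1\|^2$.

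\textbf{Expansion in $h$.} Write $e^{hr}=1+hr+O(h^2)$, uniformly in $x_1$, assuming $r$ is bounded (finite $\mathcal X$ suffices) and $g$ is bounded at the fixed $\theta$. Then
\[
G_0=-\lambda\sum_i g(x_1^i)+O(h),\qquad G_1=-\lambda\sum_i\bar g_i+O(h).
\]
Hence
\[
\mathrm{Var}[G_0]-\mathrm{Var}[G_1]=\mathrm{Var}\Big[\lambda\sum_i g(x_1^i)\Big]-\mathrm{Var}\Big[\lambda\sum_i \bar g_i\Big]+O(h).
\]
At order zero, $\lambda\sum_i\bar g_i=\mathbb E_a\big[\lambda\sum_i g(x_1^i)\mid t,x_t\big]$, since $\lambda$ and the masked set are determined by $(t,x_t)$. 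The law of total variance then shows that this difference equals
\[
D_0:=\mathbb E\Big[\mathrm{Var}\Big(\lambda\sum_i g(x_1^i)\,\Big|\,t,x_t\Big)\Big]\ge 0 .
\]

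\textbf{Main obstacle.} The zeroth-order gap $D_0$ dominates the $O(h)$ remainder only if $D_0>0$. If instead $D_0=0$, then $\sum_i g(x_1^i)$ is a.s. determined by $x_t$. In that case I would expand to the next order: each residual term carries either a factor $(e^{hr}-1)$ or multiplies that vanishing conditional fluctuation. The cross term is proportional to $\mathrm{Cov}\big(e^{hr}-1,\ \sum_i g(x_1^i)-\sum_i\bar g_i\big)$, and this vanishes when the fluctuation is zero. So the difference becomes $O(h^2)$ with the nonnegative leading coefficient from the $c$-linear quadratic. Concretely, $\mathrm{Var}[G_c]$ is a quadratic in $c$ whose minimizer is
\[
c^*=\frac{\mathbb E\big[e^{hr}\langle g-\bar g,\,g\rangle\big]}{\mathbb E\|g-\bar g\|^2}\;\to\;1 \quad\text{as } h\to 0 .
\]
Since a convex quadratic is smaller at points closer to its minimizer, $c^*$ eventually lies closer to $1$ than to $0$, which gives $\mathrm{Var}[G_0]\ge\mathrm{Var}[G_1]$ for small $h$. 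I would present this quadratic-in-$c$ argument as the main proof, since it handles both cases uniformly, and I would assume nondegeneracy $\mathbb E\|g-\bar g\|^2>0$; otherwise $G_0=G_1$ and equality holds.
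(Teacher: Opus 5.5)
Your proof is correct, but your main argument takes a different route from the paper's. The paper fixes $(t,x_t,i)$, expands $G_0,G_1$ to first order in $h$, and uses the law of total variance to claim a \emph{strict} inequality at $h=0$. It then concludes by continuity in $h$; your $D_0$ computation is exactly that step. You instead use that $G_c$ is affine in $c$, so $\mathrm{Var}[G_c]$ is a convex quadratic whose minimizer $c^*(h)\to 1$, and you conclude once $c^*(h)\ge 1/2$.

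This buys three things.
\begin{itemize}
\item The paper simply asserts $\mathbb E[\mathrm{Var}(\phi(x_1^i)\mid x_t,i)]>0$. In your formulation the leading coefficient of the quadratic is $\mathbb E\|\Delta\|^2$ with $\Delta:=\lambda\sum_i(g(x_1^i)-\bar g_i)$, and this equals $D_0$. So $D_0=0$ already forces $G_0=G_1$ a.s., and you can drop the second-order discussion in your ``main obstacle'' paragraph.
\item It gives the exact identity
\[
\mathrm{Var}[G_0]-\mathrm{Var}[G_1]=\mathbb E\|\Delta\|^2+2\,\mathbb E\Big\langle \lambda\sum_i\big(e^{hr(x_1)}-1\big)g(x_1^i),\,\Delta\Big\rangle ,
\]
so Cauchy--Schwarz yields an explicit threshold on $h$ rather than a bare continuity argument.
\item It treats the full per-sample estimator (sum over masked positions, weight $\lambda(t)$) directly. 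The paper works per $(t,i)$ and then asserts without argument that averaging preserves the inequality.
\end{itemize}
Your single-sample gradients are also the right ones. The paper's proof drops the $1/w$ on the $c\,\pi_a$ term of $T_c$ and obtains $G_1=w\mu+(w-1)\phi$ instead of $\mu+(w-1)\phi$. This is harmless at $h=0$ but wrong at first order.

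One hypothesis should be made explicit. Bounded $r$ and $g$ do not give the finite second moments that your quadratic, and your $O(h)$ remainders, require, because the per-sample gradient carries the unbounded factor $\lambda(t)$. Take $\alpha(t)=t$ with $t$ uniform. States with exactly one masked token contribute $\kappa\int_0^1 t^{L-1}(1-t)^{-1}\,dt$ to $\mathbb E\|\Delta\|^2$, where $\kappa=\sum_i\mathbb E_a[\mathrm{tr}\,\mathrm{Var}(g(x_1^i)\mid x_1^{-i})]$ and $x_1^{-i}$ denotes the other tokens. This is infinite unless $\kappa=0$. So either assume these moments are finite, or read the statement conditionally on $t$, as the paper effectively does.
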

We present a proof of \eqref{eq:variance comparison} in Appendix~\ref{app:proofs}.

\subsection{DTM Loss Bounds KL on Terminal Law}
Via the reweighting by the unmasking schedule, the $c$-DTM objective \eqref{eq:c-DTM objective} inherits the standard MDM principle that controlling local posteriors controls the terminal marginal. Analogous to \eqref{eq:MDM loss bounds KL}, we have
\FloatBarrier
\begin{restatable}{proposition}{boundKL}\label{prop:DTM loss bounds KL}
    Let $\rho_\theta$ be the terminal marginal distribution on $\mathcal V^L$ induced by the parametric model's unmasking posterior $\pi_\theta$. Then 
    \begin{equation}\label{eq:DTM loss bounds KL}
        \mathrm{KL}(\rho_{1,a+h}\|\rho_\theta)\leq\frac{1}{Z}\Big(\mathcal L^{c\text{-DTM}}_{a\to a+h}(\theta)-\mathcal L^{c\text{-DTM}}_{a\to a+h}(\theta^*)\Big),
    \end{equation}
    where $Z=\mathbb E_a[e^{hr(x_{1})}]$.
\end{restatable}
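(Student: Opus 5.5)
The plan is to show that the $c$-DTM excess loss equals $Z$ times the excess of the ideal objective \eqref{eq:ideal loss}, weighted by $\lambda(t)$. That weighted ideal objective is the standard MDM loss \eqref{eq:MDM loss} for the tilted data distribution $\rho_{1,a+h}$. Then \eqref{eq:MDM loss bounds KL} applied with $\rho_1$ replaced by $\rho_{1,a+h}$ gives the claim.

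\textbf{Step 1 (reduce to per-state excess).} Fix $(t,x,i)$ with $x^i=\im$. Cross-entropy is linear in its first argument, and $\mathrm{CE}(p\|q)=-\sum_v p(v)\log q(v)$. Conditioning on $x_t$ and using \eqref{eq:unbiased target} followed by Proposition~\ref{prop:esscher}, I get
\[
\mathbb E_a\big[e^{hr(x_1)}\mathrm{CE}(T_c\|\pi_\theta)\mid x_t\big]
= W(x_t)\,\mathrm{CE}\big(\pi_{a+h}(\cdot\mid x_t,i)\|\pi_\theta(\cdot\mid x_t,i)\big),
\]
where $W(x_t):=\mathbb E_a[e^{hr(x_1)}\mid x_t]$. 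The reason is that $\sum_v \mathbb E_a[e^{hr}T_c(v)\mid x_t]\log\pi_\theta(v)=\sum_v W\,\pi_{a+h}(v)\log\pi_\theta(v)$, which follows from the characterization \eqref{eq:characterization of pi_{a+h}}. At $\theta^*$ we have $\pi_{\theta^*}=\pi_{a+h}$ by Proposition~\ref{prop:objective}, so the difference of the two losses is
\[
\int_0^1\lambda(t)\,\mathbb E_a\Big[\sum_{i:x_t^i=\im}W(x_t)\,\mathrm{KL}\big(\pi_{a+h}(\cdot\mid x_t,i)\|\pi_\theta(\cdot\mid x_t,i)\big)\Big]dt .
\]

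\textbf{Step 2 (change of measure).} The integrand is a function of $x_t$ alone. Pulling $W(x_t)$ back inside as $e^{hr(x_1)}$ by the tower property and applying \eqref{eq:unconditional change of measure} turns $\mathbb E_a[e^{hr(x_1)}\,\cdot\,]$ into $Z\,\mathbb E_{a+h}[\,\cdot\,]$. Hence the difference of losses equals $Z$ times
\[
\int_0^1\lambda(t)\,\mathbb E_{a+h}\Big[\sum_{i:x_t^i=\im}\mathrm{KL}(\pi_{a+h}\|\pi_\theta)\Big]dt,
\]
which is exactly $\mathcal L_{\mathrm{MDM}}^{(a+h)}(\theta)-\min\mathcal L_{\mathrm{MDM}}^{(a+h)}$. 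Here $\mathcal L_{\mathrm{MDM}}^{(a+h)}$ denotes the MDM loss \eqref{eq:MDM loss} with data $\rho_{1,a+h}$, whose minimizer is $\pi_{a+h}$ and whose excess is the expected posterior KL.

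\textbf{Step 3 (invoke the MDM bound).} Applying \eqref{eq:MDM loss bounds KL} with data distribution $\rho_{1,a+h}$ gives $\mathrm{KL}(\rho_{1,a+h}\|\rho_\theta)\le \mathcal L_{\mathrm{MDM}}^{(a+h)}(\theta)-\min$. Dividing the identity from Step 2 by $Z$ then yields \eqref{eq:DTM loss bounds KL}.

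The main subtlety is justifying the use of \eqref{eq:MDM loss bounds KL} for a general data law. If it must be proved rather than cited, I would use the path-space KL (Girsanov) formula for CTMCs with the same jump structure. Applied to the true tilted chain $R_{t,a+h}$ and the model chain $R^\theta_t$, it gives
\[
\int\lambda(t)\,\mathbb E\Big[\sum_i\sum_v \pi_{a+h}\log\frac{\pi_{a+h}}{\pi_\theta}\Big]dt .
\]
The total-rate terms cancel because both chains have total rate $\lambda(t)$ per masked position. The terminal KL is then bounded by the path KL via the data-processing inequality. A second, minor point is that $Z$ and $W$ must be finite, which holds because $\mathcal V^L$ is finite.
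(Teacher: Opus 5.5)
Your proposal is correct and follows the paper's proof essentially step for step. You reduce $T_c$ to $\pi_{a+h}$ via unbiasedness and the Esscher characterization, then change measure from $\mathbb P_a$ to $\mathbb P_{a+h}$ to extract the factor $Z$, and finally identify the excess with the path-space KL (exit-rate terms cancel) and apply data processing. The differences are cosmetic: you take the difference of losses before changing measure and name the result the tilted MDM loss, whereas the paper changes measure on the cross-entropy functional $\widetilde L_{a+h}$ first.
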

A proof is given in Appendix~\ref{app:proofs}.

\begin{figure*}[!t]
    \centering
    \includegraphics[width=0.8\linewidth]{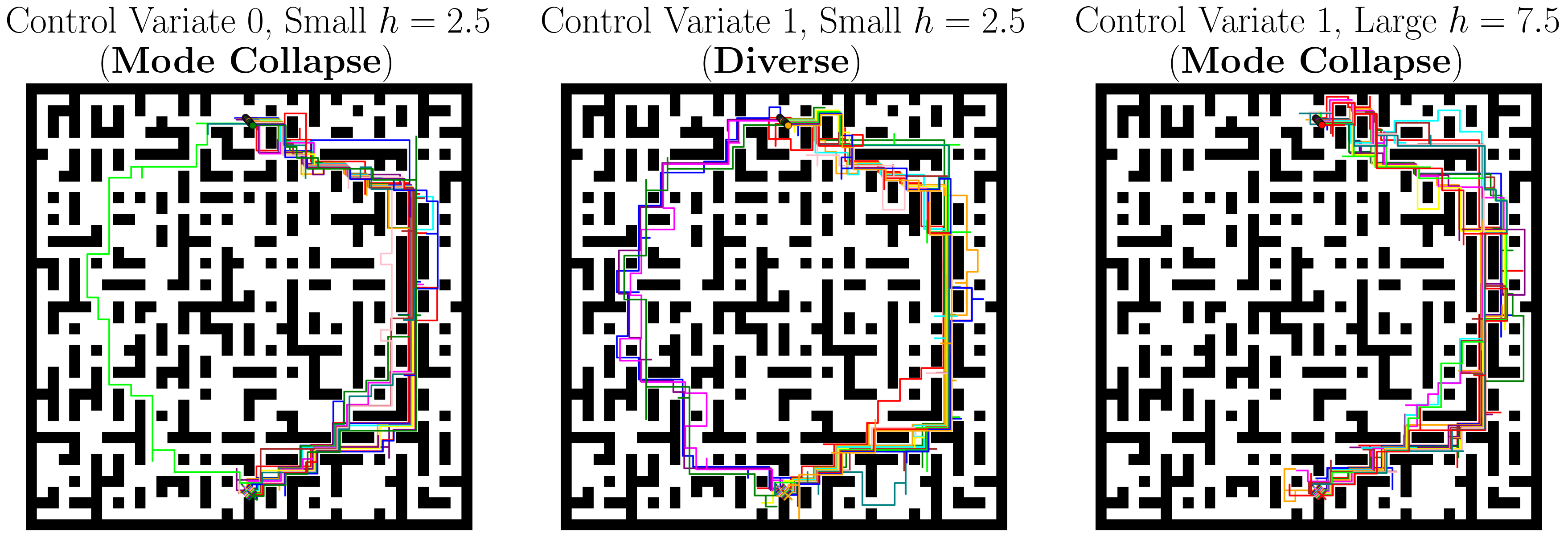}
    \caption{Comparison of performance on maze planning task for DTM with and without the control variate.}
    \label{fig:maze}
\end{figure*}

\section{Experiments} \label{sec:exp}
In this section, we evaluate DTM on both controlled synthetic settings and large-scale dLLM fine-tuning. In \ref{subsec:practical_interventions}, we first describe two practical interventions: aligning DTM with semi-autoregressive (SAR) decoding, a commonly used dLLM inference strategy, and improving sample efficiency via a replay buffer. In \ref{subsec:maze_planning}, we then implement DTM on a synthetic maze-planning task to study the effects of the control variate and annealing step size on stability, reward, and diversity. Finally, in \ref{subsec:fine-tuning}, we fine-tune LLaDA-8B-Instruct with DTM on math reasoning and planning benchmarks, and complement the main results with ablations and a wallclock comparison against prior RL-based methods.

\subsection{Practical Interventions}\label{subsec:practical_interventions}
\textbf{DTM is adaptive to semi-autoregressive decoding.}
Many state-of-the-art dLLMs are deployed with semi-autoregressive (SAR) decoding, generating blocks autoregressively while allowing any-order updates within each block \citep{han-etal-2023-ssd,arriola2025block,nie2025llada,kim2025tokenordering}. This makes it desirable to align the $x_t$'s masking pattern encountered during training to match those arising from SAR decoding, rather than from random masking over the full sequence.

Our key observation is that DTM's guarantee naturally extends to this setup. Since the $c$-DTM loss is defined pointwise in $(t,x_t,i)$ pair, its minimizer depends on the decoding procedure \emph{only} through the distribution of the conditioning states $x_t$ and the set of masked indices $i$. Therefore, to align DTM with SAR decoding, it suffices to replace the fully random-masking interpolant \eqref{eq:tilted_interpolant_a} with an SAR-compatible one and restrict the per-time summation to the active block.

The only further modification is to replace the hazard $\lambda(t)$ with the one induced by the SAR interpolant, which preserves the same KL-control guarantee through the corresponding reweighting of local cross-entropies. We present the SAR interpolant, its hazard, and the resulting SAR-aligned $c$-DTM objective in Appendix~\ref{app:SAR DTM}.

\textbf{Replay buffer.} A practical advantage that DTM offers is that it directly optimizes intermediate training states $x_t$, rather than necessarily requiring a fresh, online rollout for every iteration. Consequently, a single rollout and reward pair $(x_1,r(x_1))$ can be recycled many times, as many distinct training states $x_t$'s can be produced per $x_1$ via the stochastic interpolant \eqref{eq:tilted_interpolant_a}.

We therefore maintain a replay buffer of terminal samples and their rewards. To form each minibatch, we first sample $x_1$ from the buffer and then construct fresh intermediate states $x_t$ via the interpolant, which requires \emph{no additional neural network calls}. The buffer is refreshed periodically with newly generated rollouts. This design substantially amortizes the cost of expensive online rollouts.

\subsection{Maze Planning}\label{subsec:maze_planning}
We first study DTM on a synthetic maze-planning task, which provides a controlled and visualizable setting to examine how the control variate $c$ and annealing step size $h$ affect reward, validity, and solution diversity. We find that both materially affect training: using the control variate together with a smaller annealing step improves stability and preserves diverse high-reward solutions, whereas removing the control variate or taking overly aggressive tilting steps leads to visible mode collapse.

\textbf{Task.}
At a high level, an MDM is given the start and goal coordinates in a fixed maze (see Fig.~\ref{fig:maze_empty}) and is tasked with generating a valid path connecting them without hitting walls. Concretely, a prompt has the format $(\texttt{s}, \texttt{g}, \texttt{SEP}, \im, \ldots, \im)$, where \texttt{SEP}, \texttt{s}, and \texttt{g} denote the separation token, start and goal, respectively. The MDM then generates a sequence $(\texttt{s}, \texttt{g}, \texttt{SEP}, z_1, z_2, \ldots, z_n, \texttt{PAD}, \ldots, \texttt{PAD})$, where \texttt{PAD} is a padding token and $(z_1, \ldots, z_n)$ corresponds to a path in the maze grid.

\begin{table*}[!t]
\caption{Model performance on four reasoning benchmarks. The best results are bolded and the second best are underlined.}
\label{tab:main_results}
  \label{sample-table}
    \centering
    \begin{tiny}
      \begin{sc}
        \adjustbox{width=1.0\textwidth,center}{
        \begin{tabular}{lccc cc cc cc cc}
            \toprule
            & \multicolumn{2}{c}{\textbf{MATH500 (0-shot)}} & \multicolumn{2}{c}{\textbf{Countdown (0-shot)}} & \multicolumn{2}{c}{\textbf{Sudoku (3-shot)}} & \multicolumn{2}{c}{\textbf{GSM8K(0-shot)}}\\
            \cmidrule(lr){2-3} \cmidrule(lr){4-5} \cmidrule(lr){6-7} \cmidrule(lr){8-9}
            \textbf{Model / Seq Len} & \textbf{256} & \textbf{512} & \textbf{256} & \textbf{512} & \textbf{256} & \textbf{512} & \textbf{256} & \textbf{512}\\
            \midrule
            LLaDA-8B-Inst.
            & 32.4 & 34.6 & 16.8 & 16.8 & 27.7 & 26.2 & 77.2 & 79.8\\
            LLaDA-1.5
            & 32.2 & 35.8 & 21.1 & 21.5 & 26.9 & 29.0 & 80.5 & 81.9\\
            D1
            & 36.0 & 39.4 & 30.9 & 34.4& 32.5 & 29.3 & 80.6 & 81.3\\
            WD1
            & \underline{37.4} & 39.0 & 52.3 & 50.8 & 32.1 & 22.5 & 81.5 & 80.3\\
            UniGRPO~
            & \underline{37.4} & 39.4 & 43.0 & 57.0 & 67.0 & 62.9 & \underline{82.5} & 82.7\\
            SPG
            & \textbf{40.0} & \textbf{41.8} & \underline{70.7} & \underline{70.3} & \underline{94.0} & \underline{93.1} & \textbf{86.1} & \textbf{84.5}\\
            \midrule
            DTM (Ours) 
            & 36.0 & \underline{40.2} & \textbf{81.3} & \textbf{78.9} & \textbf{99.2} & \textbf{99.4} & 81.6 & \underline{83.2}\\
            \bottomrule
        \end{tabular}
        }
      \end{sc}
    \end{tiny}
  \vskip -0.1in
\end{table*}

\textbf{Setup.}
We start from a pretrained MDM that already solves the task well on a fixed $41 \times 41$ maze. We then further fine-tune this model using DTM with a task-specific reward. Concretely, we employ a \emph{stay-away-from-center} reward, which assigns to each valid path the minimum Manhattan distance, over all cells along the path, to the center of the grid. A path is considered valid if it connects \texttt{s} and \texttt{g} without hitting walls or making jumps; invalid paths receive a negative penalty. To isolate the stability effects of our design choices, we ablate (i) the control variate $c\in\{0,1\}$ in the $c$-DTM target, and (ii) the annealing step size $h\in\{2.5,\,7.5\}$, holding all else fixed and scaling the number of steps per $h$-phase so each setting uses the same compute. We defer more details to App.~\ref{app:rewards}.

\textbf{Result.}
Figure~\ref{fig:maze} and Figure~\ref{fig:line} show that the annealing step size and control variate can materially affect the training quality: using the control variate ($c=1$) with a smaller step size yields higher reward and validity while maintaining substantially better coverage, whereas either removing the control variate ($c=0$) or taking overly aggressive annealing steps (large $h$) leads to a visually apparent mode collapse (rollouts mostly concentrate on a few near-identical paths). This suggests that it is crucial in practice to balance the aggressiveness of the tilting step size against optimization stability: $h$ should be large enough to make meaningful progress, but small enough—and paired with the control variate—to avoid collapse and preserve solution diversity.

\subsection{Fine-tuning LLaDA-8B-Instruct}\label{subsec:fine-tuning}
We now present results that demonstrate the performance of DTM \emph{at scale} by fine-tuning LLaDA-8B-Instruct~\cite{nie2025llada}, which has been a common testbed for post-training algorithms for dLLMs.

\textbf{Tasks and Baselines.}
Following prior protocols, we focus on math reasoning and planning benchmarks; GSM8K \citep{gsm8k}, MATH500 \citep{lightman2024verify}, Countdown \citep{pan2025tinyzero}, and Sudoku \citep{arel2025sudoku}. For a fair comparison, we adapt the prompt formatting and training-test splitting from SPG~\cite{wang2025spg}. All training and evaluation are conducted in the zero-shot setting, except for Sudoku where we use 3-shot prompts for both training and evaluation due to the base model's difficulty with Sudoku.
We benchmark against recent RL methods for fine-tuning dLLMs, including LLaDA-1.5 \citep{zhu2025llada15}, d1 \citep{zhao2025d1}, wd1 \citep{tang2025wd1}, UniGRPO \citep{yang2025mmada}, and SPG \citep{wang2025spg}.

\textbf{Implementation details.}
Across all tasks, we fine-tune the model with Low-Rank Adaptation (LoRA) \citep{hu2022lora} under rank $r=128$ and scaling factor $\alpha=64$, following all of the baselines. During the training rollout, we adopt SAR sampling with generation length $256$ and block size $32$ and train with the SAR-aligned DTM objective \eqref{eq:sar_cdtm}. We decode $2$ tokens per step for a total of $T=128$ steps. These are all standard design choices aligned with all our baselines.

For all four tasks, we use confidence-based within-block updates (selecting the to-be-revealed positions by the model's confidence in the sampled token). We use sampling temperature $1.0$ for GSM8K, MATH500, and Countdown to encourage diverse rollouts, and $0.0$ for Sudoku as it has a single correct completion per prompt.
Finally, for all tasks we use the $c$-DTM objective with control variate $c=1$ and set the annealing step size to $h=2.5$ for Sudoku, $h=6$ for Countdown, and $h=1$ for GSM8K and MATH500.

At evaluation, aligned with all baselines, we by default employ a confidence-based and semi-autoregressive strategy with block size 32 at temperature $\tau=0.0$. 

\textbf{Results.} The complete results on the performance of DTM are summarized in Table~\ref{tab:main_results}, in comparison with the base model and other baselines. DTM delivers strong gains over the base LLaDA-8B-Instruct on planning benchmarks, and is particularly effective on Sudoku: with the same evaluation protocol and SAR decoding as prior work, DTM reaches \textbf{99.2\%} accuracy at length 256 (and \textbf{99.4\%} at length 512), substantially improving over the base model (27.7/26.2) and surpassing the strongest prior baselines (e.g., SPG at 94.0/93.1).
On Countdown, DTM also achieves the best performance at both length 256 (\textbf{81.3\%}) and length 512 (\textbf{78.9\%}).
\begin{figure*}[t]
    \centering
    \includegraphics[width=1.0\textwidth]{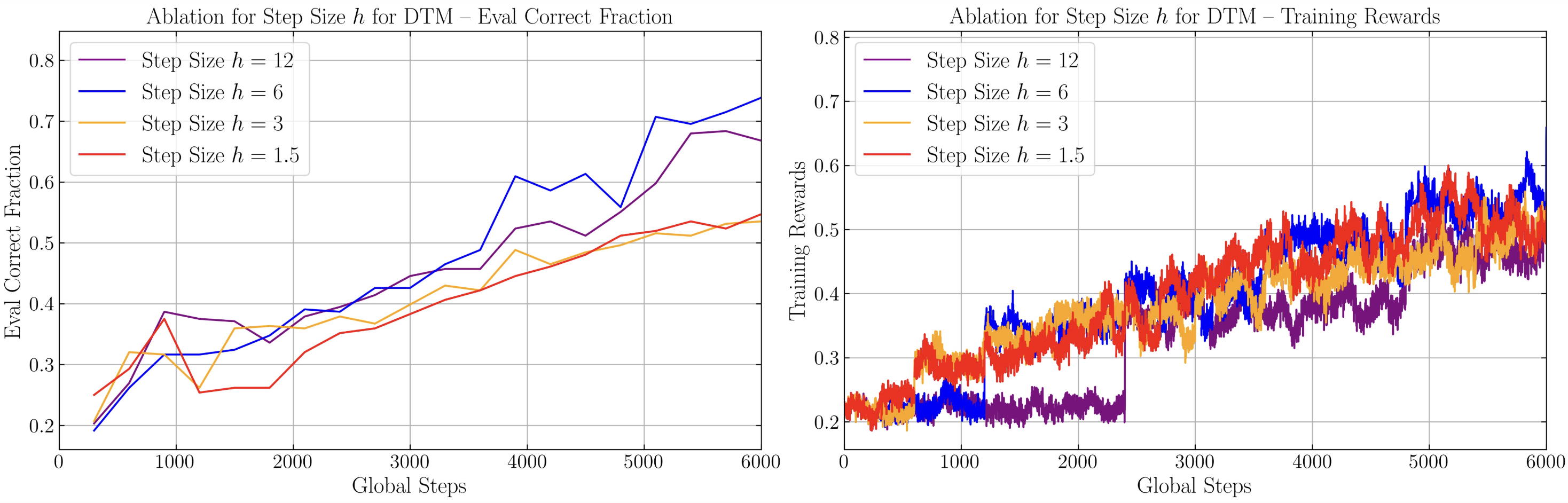}
    \caption{Ablation on annealing step size $h$ on Countdown. Left figure shows the correct fraction on the evaluation set of the model checkpoints. Right figure shows the training reward trajectory. A moderate step size $h=6$ achieves the best result.}
    \label{fig:ablate_h}
\end{figure*}

\subsection{Additional Qualitative Analysis and Ablation}
\textbf{Local posterior matching and reasoning.} On MATH500 and GSM8K, DTM improves upon the base model but is less competitive than the top-performing baselines. We hypothesize that this reflects a difference between improving \emph{state-level local predictions} and achieving \emph{globally coherent long-horizon reasoning}: DTM trains directly on local unmasking posteriors at intermediate denoising states, which can improve local reasoning consistency, but harder mathematical reasoning problems place stronger demands on maintaining coherent reasoning over longer and more diverse decoding trajectories. In such settings, better state-level posterior prediction may translate \emph{less directly} into final-answer accuracy. Still, DTM improves substantially with additional decoding budget on MATH500 and GSM8K, rising from 36.0 and 81.6 at generation length 256 to 40.2 and 83.2 at length 512, which is consistent with the view that stronger local predictions can be converted into better long-horizon reasoning when given more inference steps.

To further support the claim that DTM’s direct state-level training improves reasoning consistency, we provide additional qualitative evidence from Countdown, where reasoning is more structured and easier to inspect. As shown in Fig.~\ref{fig:effective_gen_length}, 
DTM produces a more stable effective generation length than sequence-level RL baselines, suggesting less reward hacking and more stable intermediate reasoning.

\textbf{Wallclock Comparison.} On the tasks where DTM outperforms the strongest RL baselines, we show that DTM achieves higher rewards more efficiently. Under the same training setup on 8 H100 GPUs, we compared DTM to SPG on the Sudoku dataset. As shown in Figure~\ref{fig:wallclock}, DTM converges more quickly and to a higher reward.

\textbf{Ablation on $h$.} We ablate on the annealing step size and perform training on Countdown with $h\in\{1.5,3,6,12\}$, while matching the overall compute budget and keeping the remaining hyperparameters fixed. As shown in Fig.~\ref{fig:ablate_h}, the best performance is attained with $h=6$, while DTM remains competitive across other choices of $h$, which indicates that the method is reasonably \emph{robust} to the choice of $h$. We did not observe instability caused by the importance weighting in these runs. Empirically, the main effect of $h$ appears to be the expected tradeoff: when $h$ is too small, each phase makes only a limited update to the policy, and errors due to insufficient minimization in each phase accumulate; when $h$ is too large, each phase tilts toward a target that is farther away and optimization becomes less effective, with a greater risk of mode collapse (also consistent with \citep{guo2025pdns} and our Fig. \ref{fig:maze} ablation). A moderate value ($h=6$) provides the best balance between these two effects, achieving high convergence without mode collapse: Countdown questions naturally have two modes: ones solvable with addition/subtraction and ones requiring multiplication/division; the second mode is tiny with base LLaDA achieving only 1.95\% accuracy, yet our $h=6$ fine-tuned model is able to \emph{preserve the mode} and improve it to 2.78\% accuracy, whereas this mode collapsed (0\%) in the SPG fine-tuned model.

\begin{figure}[ht]
    \centering
    \includegraphics[width=1.0\linewidth]{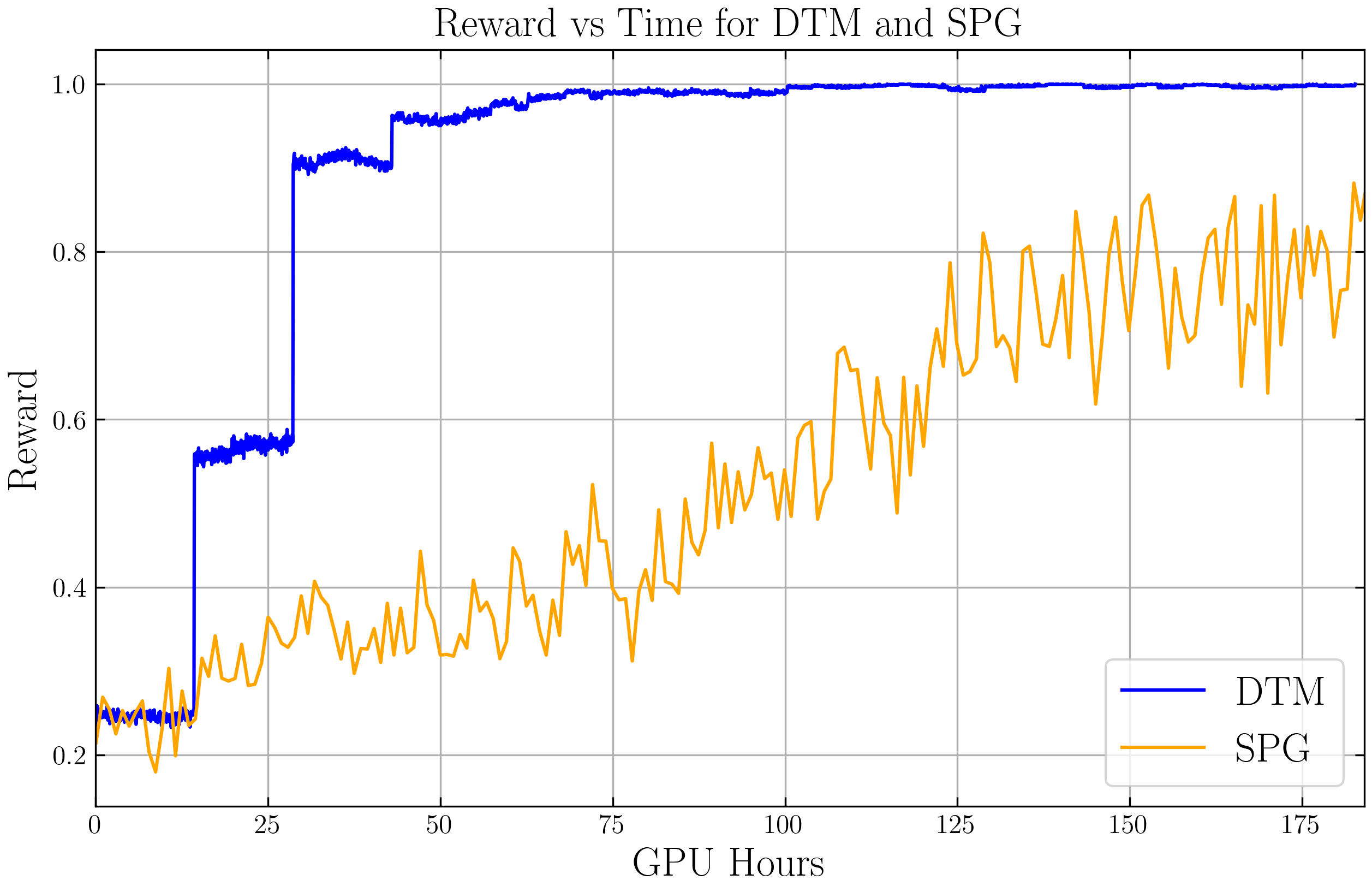}
    \caption{Wallclock comparison for DTM and SPG on Sudoku, both trained on 8 H100 GPUs, with reward normalized to $[0,1]$. DTM attains a higher reward and is more efficient. Since DTM reward is evaluated for frozen model $\pi_a$ within each $a\mapsto a+h$ phase, the reward is roughly constant per phase, and jumps at the phase boundary when the model $\pi_a$ is updated to $\pi_{\theta}\approx \pi_{a+h}$ as in Algorithm \ref{alg:dtm_training}. The SPG reward is evaluated on the training batch, thus showing a continual increase.}
    \label{fig:wallclock}
\end{figure}
\section{Related Work}
\paragraph{Reinforcement Learning for Fine-tuning Masked Diffusion LLMs.}
A rapidly growing line of work adapts online RL and preference optimization to dLLMs, but faces a central obstacle: policy-optimization objectives require (explicitly or implicitly) evaluating likelihood ratios or sequence-level likelihood surrogates.
Early approaches such as \citet{zhao2025d1} introduce diffusion-compatible variants of group-based policy optimization by leveraging tractable \emph{local} unmasking quantities, and subsequent works explore alternative estimators and objectives to reduce the bias and/or variance induced by the inherent likelihood intractability, including weighted policy-optimization objectives \citep{tang2025wd1}, masking-structure-based likelihood/trajectory estimators \citep{wang2025d2}, variance-reduced ELBO-based preference optimization \citep{zhu2025llada15}, lower-variance ELBO approximations for group objectives \citep{rojas2025gdpo}, and bound-sandwiching strategies to mitigate ELBO-induced bias \citep{wang2025spg}.
Relatedly, \citet{luo2025dgpo} proposes a direct group-preference optimization objective that avoids a policy-gradient likelihood-ratio form, enabling efficient deterministic samplers in some diffusion settings.

As we noted in the introduction, our approach is orthogonal: rather than improving or approximating marginal-likelihood surrogates, DTM derives a \emph{local} projection objective with theoretical guarantee of the ground-truth minimizer for the incremental update $\pi_a \mapsto \pi_{a+h}$ that bypasses likelihood evaluation altogether, while still supporting practical inference procedures (including semi-autoregressive variants).


\paragraph{Tilt Matching for Continuous Diffusion}
Our work builds most directly on \emph{Tilt Matching} (TM), which studies exponential reward tilting for continuous-space diffusion/flow models by viewing fine-tuning as an \emph{$a$-indexed evolution} of intermediate targets $\rho_{1,a}\propto \rho_1 e^{a r}$ and learning update rules that move from $a$ to $a{+}h$ rather than attempting a one-shot tilt \citep{potaptchik2025tiltmatchingscalablesampling}. TM derives conditional (Esscher) relations that express tilted dynamics using expectations under the current model, motivating incremental annealing in $a$, practical objectives, and variance-reduction strategies.

While DTM borrows this core \emph{incremental tilting} philosophy, our contribution is not a routine translation to a new setting: the discrete CTMC / masked-diffusion domain changes the object being learned and forces a different set of technical and practical resolutions. In particular, discrete MDMs are naturally specified by a CTMC generator whose learnable component is a local \emph{unmasking posterior}; connecting this to the interpolant-induced marginal path yields a concrete rate factorization and lets us derive a \emph{cross-entropy} fine-tuning objective that is aligned with base MDMs pretraining and the discrete space geometry while still providing a principled control of the induced terminal distribution (via a KL bound tied to the loss). Moreover, because state-of-the-art dLLMs commonly rely on semi-autoregressive decoding at inference, we show how to modify the interpolant and time-weighting so that training matches the \emph{same} family of partially-masked states encountered at test time, improving train-test alignment without changing the population optimum. Finally, we validate these discrete-specific design choices at scale by fine-tuning LLaDA-8B-Instruct and demonstrating that DTM remains effective in the large-model regime where practical stability and decoding alignment matter most.

\section{Conclusion}
In this paper, we introduce Discrete Tilt Matching (DTM). DTM views reward fine-tuning as progressive distribution tilting and leverages the continuous-time Markov chain structure of masked diffusion language models. DTM bypasses intractable likelihood issues of masked diffusions, but also is provably minimized at the desired tilted unmasking posterior.

\paragraph{Broader viewpoint.}
The success of RL-style post-training for autoregressive language models has led to an implicit premise that similar techniques should extend seamlessly to diffusion language models. Broadly speaking, our work questions this; the likelihood-based RL-style post-training may not be the ideal framework in the setup of MDMs. Viewed this way, DTM represents a step that \emph{opens the door} to alternative post-training paradigms that are better aligned with the structure of diffusion-based language models.

\section*{Impact Statement}
This paper presents work whose goal is to advance the field
of Machine Learning. There are many potential societal
consequences of our work, none which we feel must be
specifically highlighted here.

\section*{Acknowledgement}
YC and SW contributed equally to this work and agree that the order of their names may be exchanged as useful to highlight their contributions in individual professional pursuits. PP is supported by the EPSRC CDT in Modern Statistics and Statistical Machine Learning [EP/S023151/1], a
Google PhD Fellowship, and an NSERC Postgraduate Scholarship (PGS D). MSA is supported by a Junior Fellowship
at the Harvard Society of Fellows as well as the National Science Foundation under Cooperative Agreement
PHY-2019786 (The NSF AI Institute for Artificial Intelligence and Fundamental Interactions\footnote{http://iaifi.org}
). This work has
been made possible in part by a gift from the Chan Zuckerberg Initiative Foundation to establish the Kempner
Institute for the Study of Natural and Artificial Intelligence.

\bibliography{example_paper}
\bibliographystyle{icml2026}

\newpage
\appendix
\onecolumn
\begin{algorithm}[t] 
  \caption{DTM Training}
  \label{alg:dtm_training}
  \begin{algorithmic}[1]
    \REQUIRE Unmasking posterior $\pi_0(\cdot\mid x,i)$ of base model, annealing step size $h$, terminal reward multiplier $A$, reward function $r:\mathcal{V}^L\to\mathbb R$, control variate $c$, learning rate $\eta$, batch size $s$, replay buffer size $N$, buffer refresh interval $K$
    \STATE Initialize $a\leftarrow0$ and $\pi_\theta\leftarrow\pi_0$
    \WHILE{$a<A$}
      \STATE Initialize step $\leftarrow 0$
      \WHILE{not converged}
        \IF{step $\bmod K = 0$}
          \STATE $\mathcal{D} \leftarrow \textsc{BuildBuffer}(\pi_a, r, N)$
        \ENDIF
        \STATE Draw minibatch $\mathcal{B} \sim \textsc{Sample}(\mathcal{D}, s)$
        \STATE For each clean sample $x_1$ in $\mathcal{B}$, independently draw a random time $t\in[0,1]$ and create partially masked sequence $x_t$ according to \eqref{eq:discrete stochastic interpolant}
        \STATE Compute loss $\mathcal{L}$ in \eqref{eq:c-DTM objective} by empirical mean of the minibatch
        \STATE Gradient descent on model $\theta \leftarrow \theta - \eta \nabla_\theta \mathcal L$
        \STATE step $\leftarrow$ step $+1$
      \ENDWHILE
      \STATE Update $\pi_{a+h}\leftarrow \pi_\theta$
      \STATE Update $a\leftarrow a+h$
    \ENDWHILE
    \STATE \textbf{return} $\pi_\theta$
  \end{algorithmic}
\end{algorithm}
\begin{algorithm}[t]
\caption{\textsc{BuildBuffer}: Roll out SAR sampling}
\label{alg:buildbuffer}
\begin{algorithmic}[1]
\REQUIRE Unmasking posterior $\pi_a$, reward $r$, buffer size $N$, total denoising steps $T$, block size $B$
\STATE $\mathcal D \leftarrow \emptyset$
\FOR{$n=1,\dots,N$}
    \STATE $X \leftarrow \im^L$ \COMMENT{fully masked length-$L$ sequence (optionally prepend a prompt)}
    \STATE $M \leftarrow L/B$ \COMMENT{\# blocks; assume $B|M$ for ease}
    \STATE $S \leftarrow T/M$ \COMMENT{steps per block; assume $M|T$}
    \FOR{$b=0,\dots,M-1$}
        \STATE $\mathcal{J} \leftarrow \{bB+1,\dots,(b+1)B\}$ \COMMENT{current block indices}
        \FOR{$s=1,\dots,S$}
            \STATE $U \leftarrow \{i\in\mathcal{J}: X^i=\im\}$ \COMMENT{masked positions in current block}
            \STATE $\mathcal{I} \leftarrow$ sample $B/S$ indices uniformly from $U$ \COMMENT{random order within block}
            \STATE $P \leftarrow \pi_a(\cdot \mid X)$ \COMMENT{\emph{one} NN call, $L$-by-$|\mathcal{V}|$ matrix}
            \FOR{each $i\in\mathcal{I}$}
                \STATE sample $v \sim P[i,\cdot]$; set $X^i \leftarrow v$
            \ENDFOR
        \ENDFOR
    \ENDFOR
    \STATE $\mathcal D \leftarrow \mathcal D \cup \{(X, r(X))\}$
\ENDFOR
\STATE \textbf{return} $\mathcal D$
\end{algorithmic}
\end{algorithm}
\clearpage
\section{General Esscher Transform}\label{app:general esscher}
This appendix presents a general change-of-measure identity generalizing the derivation in the main text.
At a high level, our discrete Esscher reweighting (Section~\ref{subsec:esscher}) is an instance of the classical
\emph{Esscher transform} (exponential tilting), together with a standard fact that
minimizing a weighted Bregman divergence reduces to a conditional expectation under the tilted measure.

\subsection{Classical Esscher transform (exponential tilting)}
Let $Z$ be a real-valued random variable on a probability space $(\Omega,\mathcal F,\mathbb P)$ such that
$\mathbb E[e^{hZ}]<\infty$ for a given $h\in\mathbb R$.
The (classical) Esscher transform with parameter $h$ defines a new probability measure $\mathbb P^{(h)}$
via the Radon--Nikodym derivative
\begin{equation*}
\frac{d\mathbb P^{(h)}}{d\mathbb P}
=\frac{e^{hZ}}{\mathbb E[e^{hZ}]}. \label{eq:esscher_classical}
\end{equation*}
Equivalently, expectations under $\mathbb P^{(h)}$ are given by
\(
\mathbb E^{(h)}[f]=\frac{\mathbb E[e^{hZ}f]}{\mathbb E[e^{hZ}]}
\)
for any integrable $f$.
This transform goes back to Esscher (1932).

\subsection{A conditional Esscher transform}
We now state the conditional identity we repeatedly exploit.

Let $(\Omega,\mathcal F,\mathbb P)$ be a probability space.
Let $X:\Omega\to\mathcal X$ be a random variable taking values in a space $\mathcal X$, and let $U:\Omega\to \mathcal{U}$ and $T:\Omega\to\mathcal T$ be arbitrary random variables (possibly dependent on $X$) taking values in $\mathcal U$ and $\mathcal Y$, respectively. Let $w:\mathcal U\to\mathbb R$ be a measurable function such that $\mathbb E[w(U)]<\infty$.

\begin{definition}[Exponential tilt / Esscher-tilted measure]
Define the tilted measure $\mathbb P'$ by
\begin{equation}
\frac{d\mathbb P'}{d\mathbb P}=\frac{w}{\mathbb E[w]}. \label{eq:general_esscher_rn}
\end{equation}
\end{definition}
We write $\mathbb E$ (resp. $\mathbb E'$) for expectation taken under $\mathbb P$ (resp. $\mathbb P'$).
\begin{lemma}[Conditional reweighting]
\label{lem:cond_esscher}
For any integrable $f:\mathcal{X}\times\mathcal{U}\times\mathcal{T}\to\mathbb R$,
\begin{equation*}
\mathbb E'[f(X,U,T)\mid X]=\frac{\mathbb E[w\,f(X,U,T)\mid X]}{\mathbb E[w\mid X]}.
\label{eq:general_cond_reweight}
\end{equation*}
\end{lemma}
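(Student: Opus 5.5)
The plan is to verify the defining property of conditional expectation directly. Write $g(X):=\mathbb E[w\,f(X,U,T)\mid X]/\mathbb E[w\mid X]$, defined on the event $\{\mathbb E[w\mid X]>0\}$. We will also assume, as implicit in \eqref{eq:general_esscher_rn}, that $w\ge 0$ and $0<\mathbb E[w]<\infty$, so that $\mathbb P'$ is a genuine probability measure. In our application $w=e^{hr(x_1)}>0$, so these assumptions hold.

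The claim is that $g(X)$ is a version of $\mathbb E'[f\mid X]$. Two things must be checked.

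First, $g(X)$ is $\sigma(X)$-measurable. This is immediate, since it is a ratio of two $\sigma(X)$-measurable random variables.

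Second, for every bounded $\sigma(X)$-measurable $\phi(X)$ we need
\[
\mathbb E'[\phi(X)f]=\mathbb E'[\phi(X)g(X)].
\]
By \eqref{eq:general_esscher_rn}, $\mathbb E'[Y]=\mathbb E[wY]/\mathbb E[w]$ for integrable $Y$, so it suffices to show $\mathbb E[w\phi f]=\mathbb E[w\phi g]$. For the left side, the tower property and pulling out the known factor $\phi(X)$ give $\mathbb E[w\phi f]=\mathbb E\big[\phi\,\mathbb E[wf\mid X]\big]$. For the right side, $\phi g$ is $\sigma(X)$-measurable, so $\mathbb E[w\phi g]=\mathbb E\big[\phi g\,\mathbb E[w\mid X]\big]=\mathbb E\big[\phi\,\mathbb E[wf\mid X]\big]$. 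The two sides agree. Uniqueness of conditional expectation then gives the identity $\mathbb P'$-a.s.

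The only delicate step is the null set $N=\{\mathbb E[w\mid X]=0\}$, where $g$ is undefined. We handle it by showing that $N$ is $\mathbb P'$-null. Since $N\in\sigma(X)$, we have $\mathbb P'(N)\propto\mathbb E[w\mathbf 1_N]=\mathbb E\big[\mathbf 1_N\,\mathbb E[w\mid X]\big]=0$. On $N$ we also have $\mathbb E[wf\mid X]=0$ a.s., because $|wf|\le w|f|$ and $\mathbb E[w|f|\mathbf 1_N]=0$, which forces $w f=0$ a.s. on $N$. So any convention for $g$ on $N$ works, and the identity holds $\mathbb P'$-almost surely.

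Integrability is a technical point rather than an obstacle. We read "integrable $f$" as $\mathbb E'|f|<\infty$, i.e.\ $\mathbb E[w|f|]<\infty$, which is exactly what makes the conditional expectations above well defined. In our discrete setting $f$ is bounded, so this holds trivially.
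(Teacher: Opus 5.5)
Your proof is correct, and it reaches the identity by a more self-contained route than the paper. The paper's proof is a one-line appeal to the standard abstract Bayes formula $\mathbb E'[f\mid X]=\mathbb E[fZ\mid X]/\mathbb E[Z\mid X]$ with $Z=d\mathbb P'/d\mathbb P=w/\mathbb E[w]$, after which it cancels the constant $\mathbb E[w]$. You instead prove that formula in this instance by checking the defining property of conditional expectation against bounded $\sigma(X)$-measurable test functions, so the normalization drops out on its own.

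The paper's version is shorter because it relies on a textbook fact. Yours makes explicit what the paper leaves unstated: $w\ge 0$ with $0<\mathbb E[w]<\infty$; ``integrable'' read as $\mathbb E[w|f|]<\infty$; and the identity holding $\mathbb P'$-a.s., because $N=\{\mathbb E[w\mid X]=0\}$ is $\mathbb P'$-null.

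If you write it out in full, two small steps deserve a line each. First, $\mathbb E[w\phi g]=\mathbb E[\phi g\,\mathbb E[w\mid X]]$ needs $w\phi g\in L^1(\mathbb P)$, since $g$ can be unbounded. This follows from $|g|\,\mathbb E[w\mid X]\le\mathbb E[w|f|\mid X]$, and the same bound gives $\mathbb E'|g|<\infty$. Second, $\mathbb E[w|f|\mathbf 1_N]=0$ holds because $\mathbb E[w\mathbf 1_N]=0$ and $w\ge 0$ force $w\mathbf 1_N=0$ a.s.

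In the paper's application, $w=e^{hr(x_1)}>0$ on a finite state space, so all of this is automatic.
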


\begin{proof}
This is the standard conditional change-of-measure identity:
\[
\mathbb E'[f\mid X]
=\frac{\mathbb E\big[f\,\frac{d\mathbb P'}{d\mathbb P}\mid X\big]}
{\mathbb E\big[\frac{d\mathbb P'}{d\mathbb P}\mid X\big]}=
\frac{\mathbb E[w f\mid X]}{\mathbb E[w\mid X]},
\]
where we substituted \eqref{eq:general_esscher_rn} and canceled the (deterministic) constant $\mathbb E[w]$.
\end{proof}

\subsection{Tilted Bregman regression}
Let $\Phi:\mathbb R^d\to\mathbb R$ be strictly convex and twice continuously differentiable. Its associated Bregman divergence is
\begin{equation*}
D_\Phi(y\|z):=\Phi(y)-\Phi(z)-\langle\nabla\Phi(z),y-z\rangle.
\end{equation*}
Consider functions $f:\mathcal X\to\mathbb R^d$ and the weighted objective
\begin{equation}
\mathcal L(f):=\mathbb E[w\,D_\Phi\!(T\|f(X))].
\label{eq:tilted_bregman_loss}
\end{equation}
\begin{proposition}[Tilted regression lemma]
\label{prop:tilted_bregman_regression}
The unique minimizer $f^\star$ of \eqref{eq:tilted_bregman_loss} over measurable $f$ is
\begin{equation}
f^\star(x)=\mathbb E'[T\mid X=x]
=\frac{\mathbb E[wT\mid X=x]}{\mathbb E[w\mid X=x]}.
\label{eq:tilted_regression_solution}
\end{equation}
\end{proposition}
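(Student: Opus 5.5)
We use the standard Bregman projection argument: condition on $X$, reduce to a pointwise minimization in $z=f(x)$, and apply the three-point identity for Bregman divergences. Implicitly we need $w\ge 0$ with $\mathbb E[w\mid X]>0$ almost surely, $wT$ integrable, and $\mathbb E[w\,D_\Phi(T\|f^\star(X))]<\infty$; we will state these standing assumptions.

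\textbf{Step 1: a three-point identity.} For any $y,z,m\in\mathbb R^d$,
\[
D_\Phi(y\|z)=D_\Phi(y\|m)+D_\Phi(m\|z)+\langle\nabla\Phi(m)-\nabla\Phi(z),\,y-m\rangle .
\]
This is verified by expanding the definitions: the $\Phi(y)$ terms match, the $-\Phi(m)$ and $+\Phi(m)$ terms cancel, and the remaining linear terms in $y$ combine into the stated inner product.

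\textbf{Step 2: condition on $X$.} Put $y=T$, $z=f(X)$ and $m=f^\star(X)$, where $f^\star$ is given by \eqref{eq:tilted_regression_solution}; it is $\sigma(X)$-measurable. Multiply the identity by $w$ and take $\mathbb E[\cdot\mid X]$. The cross term becomes
\[
\big\langle\nabla\Phi(f^\star(X))-\nabla\Phi(f(X)),\ \mathbb E[wT\mid X]-f^\star(X)\,\mathbb E[w\mid X]\big\rangle ,
\]
which is zero by the definition of $f^\star$. Equivalently, by Lemma~\ref{lem:cond_esscher}, $f^\star(X)=\mathbb E'[T\mid X]$. Taking outer expectations gives
\[
\mathcal L(f)=\mathcal L(f^\star)+\mathbb E\big[\mathbb E[w\mid X]\,D_\Phi(f^\star(X)\|f(X))\big].
\]
In particular $\mathcal L(f)\ge\mathcal L(f^\star)$, so $f^\star$ is a minimizer.

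\textbf{Step 3: uniqueness.} Strict convexity gives $D_\Phi(m\|z)>0$ whenever $m\ne z$. Since $\mathbb E[w\mid X]>0$, equality $\mathcal L(f)=\mathcal L(f^\star)$ forces $f=f^\star$ for $P_X$-almost every $x$. Uniqueness is therefore up to null sets. Because $\mathbb E[w\mid X]>0$, $P_X$-null and $P'_X$-null sets coincide.

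\textbf{Main obstacle.} The only delicate step is justifying the conditional expectations and the vanishing of the cross term when some terms may be infinite. We handle this by assuming $\mathcal L(f^\star)<\infty$. If $\mathcal L(f)=\infty$ there is nothing to prove. Otherwise all three terms are well-defined: the first two are nonnegative and finite. For the cross term we condition first and use that $\nabla\Phi(f^\star(X))-\nabla\Phi(f(X))$ is $\sigma(X)$-measurable, truncating on the events $\{|f|,|f^\star|\le K\}$ if necessary. For the cross-entropy case, take $\Phi$ to be negative entropy restricted to the simplex; there $\nabla\Phi$ may blow up at the boundary, so the same truncation handles it.
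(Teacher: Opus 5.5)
Your proof is correct, but it takes a different route from the paper's. The paper argues by first-order conditions. It fixes $x$ and differentiates $F_x(z)=\mathbb E[w\,D_\Phi(T\|z)\mid X=x]$ using $\nabla_z D_\Phi(y\|z)=\nabla^2\Phi(z)(z-y)$, which gives $\nabla F_x(z)=\nabla^2\Phi(z)\big(z\,\mathbb E[w\mid X=x]-\mathbb E[wT\mid X=x]\big)$. It then invokes positive definiteness of $\nabla^2\Phi$ and (asserted) strict convexity of $F_x$ to conclude that the unique stationary point is the unique minimizer. You instead use the Bregman three-point identity, which gives the exact excess-loss formula $\mathcal L(f)-\mathcal L(f^\star)=\mathbb E\big[\mathbb E[w\mid X]\,D_\Phi(f^\star(X)\|f(X))\big]$.

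Your route is the more robust of the two, for the following reasons.
\begin{itemize}
\item It needs only differentiability and strict convexity of $\Phi$. The paper's argument additionally needs a positive definite Hessian, which strict convexity does not supply (e.g.\ $\Phi(z)=z^4$ at $z=0$).
\item The paper also needs convexity of $D_\Phi(y\|\cdot)$ in its second argument. This fails for general Bregman divergences: for $\Phi(z)=e^z$ one has $\partial_z^2 D_\Phi(y\|z)=e^z(1+z-y)<0$ when $z<y-1$. It holds only in special cases such as squared loss and KL.
\item You prove global optimality directly rather than through a unique critical point.
\item You make explicit the standing assumptions ($w\ge 0$, $\mathbb E[w\mid X]>0$, integrability) and the $P_X$-a.e.\ sense of uniqueness, which the paper leaves implicit.
\item For $\Phi$ the negative entropy, your decomposition becomes $\mathcal L(f)-\mathcal L(f^\star)=\mathbb E\big[\mathbb E[w\mid X]\,\mathrm{KL}(f^\star(X)\|f(X))\big]$. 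This mirrors the excess-loss computation the paper carries out separately in its KL-bound proposition.
\end{itemize}
The paper's calculation is shorter, and it is adequate for the KL case that the paper actually instantiates.
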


\begin{proof}
Fix $x$ and define $F_x(z):=\mathbb E[w\,D_\Phi(T\|z)\mid X=x]$.
Using $\nabla_z D_\Phi(y\|z)=\nabla^2\Phi(z)\,(z-y)$, we obtain
\[
\nabla F_x(z)
=\mathbb E[w\,\nabla^2\Phi(z)(z-T)\mid X=x]
=\nabla^2\Phi(z)\Big(z\,\mathbb E[w\mid X=x]-\mathbb E[wT\mid X=x]\Big).
\]
Since $\nabla^2\Phi(z)$ is positive definite and $F_x$ is strictly convex in $z$,
the unique minimizer satisfies
\(
z\,\mathbb E[w\mid X=x]=\mathbb E[wT\mid X=x]
\),
i.e.\ $z=f^\star(x)$ as in \eqref{eq:tilted_regression_solution}.
The equality to $\mathbb E'[T\mid X=x]$ follows from Lemma~\ref{lem:cond_esscher}.
\end{proof}

\subsection{A control variate that preserves the minimizer}
The lemma above remains valid if we replace $T$ by any random target $T_c$ satisfying
\(
\mathbb E[wT_c\mid X]=\mathbb E[wT\mid X].
\)
A simple construction yields a family of such targets.

Let $\mu(X):=\mathbb E[T\mid X]$. For any measurable $c:\mathcal X\to\mathbb R$, define
\begin{equation*}
T_c:=T+\frac{c(X)}{w}\big(\mu(X)-T\big).
\label{eq:control_variate_target}
\end{equation*}
In particular, $T_c$ recovers $T$ with $c\equiv0$. Then, conditioning on $X$,
\[
\mathbb E[wT_c\mid X]
=\mathbb E[wT\mid X]+c(X)\big(\mu(X)-\mathbb E[T\mid X]\big)
=\mathbb E[wT\mid X].
\]
Therefore replacing $T$ with $T_c$ in \eqref{eq:tilted_bregman_loss} yields the same minimizer:

\begin{corollary}[Control variate preserves the tilted minimizer]
\label{cor:control_variate}
Let \begin{equation*}
    \mathcal L_c(f):=\mathbb E[w\,D_\Phi\!\left(T_c\|f(X)\right)]
\end{equation*}
Then $\arg\min_f \mathcal L_c(f)=\arg\min_f \mathcal L(f)$ and the unique minimizer is still $f^\star(x)=\mathbb E_{a+h}[T\mid X=x]$.
\end{corollary}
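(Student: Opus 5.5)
The statement to prove is Corollary~\ref{cor:control_variate}. The plan is to reduce it to Proposition~\ref{prop:tilted_bregman_regression}, applied with $T$ replaced by $T_c$. That proposition (via Lemma~\ref{lem:cond_esscher}) says the unique minimizer of $\mathcal L_c$ is
\[
f_c^\star(x)=\frac{\mathbb E[wT_c\mid X=x]}{\mathbb E[w\mid X=x]}.
\]
By the identity verified just before the corollary, $\mathbb E[wT_c\mid X]=\mathbb E[wT\mid X]$. Hence $f_c^\star=f^\star$, and both argmin sets are the same singleton.

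The steps, in order, are as follows.

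\textbf{Step 1: pointwise minimization.} I would note that the proof of Proposition~\ref{prop:tilted_bregman_regression} fixes $x$ and minimizes the conditional objective $F_x(z)$. It uses only three facts: $\nabla_z D_\Phi(y\|z)=\nabla^2\Phi(z)(z-y)$, positive definiteness of $\nabla^2\Phi$, and $\mathbb E[w\mid X=x]>0$. None of these depends on the particular random target, so the argument applies verbatim to $T_c$, provided $wT_c$ is integrable.

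Integrability holds because $wT_c=wT+c(X)(\mu(X)-T)$. This is integrable whenever $wT$ and $c(X)(\mu(X)-T)$ are, which I would state as a standing assumption (for example, $c$ bounded and $T$ integrable).

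\textbf{Step 2: identify the minimizer.} Using the tower property and $\mu(X)=\mathbb E[T\mid X]$, the computation
\[
\mathbb E[wT_c\mid X]=\mathbb E[wT\mid X]+c(X)\bigl(\mu(X)-\mathbb E[T\mid X]\bigr)=\mathbb E[wT\mid X]
\]
shows the two conditional weighted means coincide. So $f_c^\star=f^\star$.

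\textbf{Step 3: interpret under the tilted measure.} Lemma~\ref{lem:cond_esscher} identifies $f^\star(x)=\mathbb E'[T\mid X=x]$. In the DTM instantiation we take $w=e^{hr(x_1)}$, so $\mathbb P'=\mathbb P_{a+h}$ by \eqref{eq:RN derivative}, and $\mathbb E'$ is exactly $\mathbb E_{a+h}$, as claimed.

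\textbf{Main obstacle.} The main subtlety is that the proposition's derivation sets a gradient to zero over all of $\mathbb R^d$. For general $w$ and $c$, $T_c$ need not lie in the domain where $\Phi$ is defined, or in the simplex (for cross-entropy, $\Phi$ is negative entropy on the simplex). I would handle this as follows. Since $F_x$ depends on $T_c$ only through $\mathbb E[w\mid X]$ and $\mathbb E[wT_c\mid X]$, up to additive terms constant in $z$, the objective $F_x$ for $T_c$ differs from the one for $T$ only by a $z$-independent constant. Concretely, $D_\Phi(y\|z)$ is affine in $y$ apart from $\Phi(y)$, so
\[
\mathcal L_c(f)-\mathcal L(f)=\mathbb E\bigl[w\bigl(\Phi(T_c)-\Phi(T)\bigr)\bigr],
\]
which does not depend on $f$. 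I would make this the core of the proof, since it gives equality of the argmin sets directly, with no differentiability or interior argument. It requires only that $\Phi(T_c)$ be finite and integrable, as it is for $T_c$ in \eqref{eq:c-DTM target} on the simplex when $0\le c\le e^{hr}$, or after extending $\Phi$ affinely.
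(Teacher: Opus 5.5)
Your proof is correct. Steps 1--3 match the paper's argument. The paper notes that the tilted regression proposition still holds with $T_c$ in place of $T$, because its minimizer depends on the target only through $\mathbb E[wT\mid X]$, and then checks that $\mathbb E[wT_c\mid X]=\mathbb E[wT\mid X]$.

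The argument you propose as the core is genuinely different. Instead of re-running the stationarity computation for a new target, you show that $\mathcal L_c(f)-\mathcal L(f)=\mathbb E[w(\Phi(T_c)-\Phi(T))]$ for every $f$, since the cross term $\mathbb E[\langle\nabla\Phi(f(X)),\mathbb E[w(T_c-T)\mid X]\rangle]$ vanishes. This gives something the paper's version does not.
\begin{itemize}
\item The two objectives differ by an $f$-independent constant, so their argmin sets agree over \emph{any} class of predictors, including a misspecified parametric family $\{f_\theta\}$, not only over all measurable $f$.
\item It makes clear that the control variate changes only the variance of stochastic gradients, not the population landscape.
\end{itemize}
It does not by itself give uniqueness or the identification $f^\star=\mathbb E'[T\mid X]$. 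For that you still invoke the proposition for the original $T$, which is legitimate.

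On your domain caveat: for the DTM instantiation you need neither $0\le c\le e^{hr}$ nor an affine extension of $\Phi$. The loss actually trained is $\mathrm{CE}(y\|z)=-\sum_v y(v)\log z(v)$. This coincides with $D_\Phi(y\|z)-\Phi(y)$ when $\sum_v y(v)=\sum_v z(v)=1$, and it is affine in $y$ for all real $y$. With CE in place of $D_\Phi$, your computation gives $\mathcal L_c-\mathcal L\equiv 0$ exactly, with no condition on $T_c$. This matters in practice. $T_c$ always sums to one, but with $c=1$ and negative rewards (the maze penalty, the MATH500 correctness term) it can have a negative entry and leave the simplex.
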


\subsection{Recovering the identities used in the main text}
Our discrete Esscher identities (Section~\ref{sec:DTM}) are obtained by instantiating the above with:
\begin{itemize}
\item $X=x_t$ (the partially masked state), $U=x_1$ (the terminal clean sequence), and $w=e^{h r(U)}$;
\item $T$ chosen as a one-hot indicator $\mathbf{1}\{x_1^i=v\}$ or the target with control variate \eqref{eq:c-DTM target},
and $\Phi$ chosen to be the negative Shannon entropy so that $D_\Phi$ is the KL divergence, which up to an additive constant corresponds to the per-token training loss.
\end{itemize}
In particular, Lemma~\ref{lem:cond_esscher} specializes exactly to the conditional reweighting formula
$\mathbb E_{a+h}[\cdot\mid x_t]=\mathbb E_a[w(\cdot)\mid x_t]/\mathbb E_a[w\mid x_t]$ used to derive
\eqref{eq:conditional change of measure} in the main text.

\section{Proofs}\label{app:proofs}
\objective*
\noindent\hyperref[prop:objective]{(Back to Proposition~\ref*{prop:objective} in the main text.)}

\begin{proof}
We work in the nonparametric setting (equivalently assume sufficient network capacity) where we may choose an arbitrary measurable family
$\pi(\cdot\mid x_t,i)\in\Delta(\mathcal V)$ for each $(t,x_t,i)$ with $x_t^i=\im$.

\paragraph{Step 1: Reduce the random target to a deterministic conditional objective.}
Fix any $(t,x_t,i)$ with $x_t^i=\im$. For any candidate conditional distribution $\pi(\cdot\mid x_t,i)$, define the conditional loss given $x_t$:
\begin{equation}\label{eq:pointwise conditional objective}
\mathcal{L}_{t,i}\big(\pi; x_t\big):=
\mathbb{E}_a\Big[
e^{h r(x_1)}\,
\mathrm{CE}\big(T_c(\cdot\mid x_t,i,x_1)\,\big\|\,\pi(\cdot\mid x_t,i)\big)\mid x_t\Big].
\end{equation}
Using the definition
\(
\mathrm{CE}(p\|q)=-\sum_{v\in V} p(v)\log q(v)
\) and pulling $\log \pi(\cdot\mid x_t,i)$ outside the conditional expectation,
\[\mathcal{L}_{t,i}\big(\pi; x_t\big)=
-\sum_{v\in \mathcal V}
\mathbb E_a\!\left[e^{h r(x_1)}\,T_c(v\mid x_t,i,x_1)\mid x_t\right]\,
\log \pi(v\mid x_t,i).\]
Plugging in the defining unbiasedness property \eqref{eq:unbiased target}, we get
\begin{align}\label{eq:no target}
\mathcal{L}_{t,i}\big(\pi; x_t\big)
&=
-\sum_{v\in \mathcal V}
\mathbb E_a\!\left[e^{h r(x_1)}\,\mathbf 1\{x_1^i=v\}\mid x_t\right]\,
\log \pi(v\mid x_t,i)
\notag\\
&=
\mathbb E_a[e^{h r(x_1)}\mid x_t]\;
\mathrm{CE}\big(\pi_{a+h}(\cdot\mid x_t,i)\,\|\,\pi(\cdot\mid x_t,i)\big).
\end{align}
where the last line uses the conditional Esscher identity \eqref{eq:conditional change of measure}, i.e.
\[
\pi_{a+h}(v\mid x_t,i)
=
\mathbb E_{a+h}[\mathbf 1\{x_1^i=v\}\mid x_t]
=
\frac{\mathbb E_a[e^{h r(x_1)}\mathbf 1\{x_1^i=v\}\mid x_t]}{\mathbb E_a[e^{h r(x_1)}\mid x_t]}.
\]
\paragraph{Step 2: Pointwise minimization and uniqueness.}
Fix $(t,x_t,i)$.
Because the factor $\mathbb E_a[e^{h r(x_1)}\mid x_t]$ in \eqref{eq:no target} is positive and $\mathrm{CE}(p\|q)$ is strictly convex in $q$ over the simplex whenever $p$ is not degenerate,
the unique minimizer of \eqref{eq:pointwise conditional objective} over $\pi(\cdot\mid x_t,i)\in\Delta(\mathcal V)$ is
\[
\pi^\star(\cdot\mid x_t,i)=\pi_{a+h}(\cdot\mid x_t,i).
\]
\paragraph{Step 3: Lift pointwise optimality to the global $c$-DTM objective.}
The full objective \eqref{eq:c-DTM objective} is an integral over $t$ of the pointwise loss $\mathcal{L}_{t,i}(\pi; x_t)$ multiplied by the nonnegative weight $\dot\alpha(t)/(1-\alpha(t))$. Since all prefactors are nonnegative and the dependence on $\pi$ separates across $(t,x_t,i)$, the unique global minimizer
is obtained by choosing the unique pointwise minimizer for every $(t,x_t,i)$ with $x_t^i=\im$, namely
$\pi^\star(\cdot\mid x_t,i)=\pi_{a+h}(\cdot\mid x_t,i)$.
\end{proof}

\variance*
\noindent\hyperref[prop:variance comparison]{(Back to Proposition~\ref*{prop:variance comparison} in the main text.)}
\begin{proof}
Fix a time $t\in[0,1]$, a partially masked state $x_t$ and a masked position $i$ with $x_t^i=\im$.
Write $w := e^{h r(x_1)}$ and define the score-vector
\[
\phi(v) := \nabla_\theta \log \pi_\theta(v\mid x_t,i)\in\mathbb R^d,
\qquad
\mu(x_t,i) := \sum_{v\in\mathcal{V}} \pi_a(v\mid x_t,i)\,\phi(v)=\mathbb E_a[\phi(x_1^i)\mid x_t,i].
\]
Consider the per-sample (single $(t,i)$) contribution to the weighted DTM objective
\[
\ell_c(\theta) \;:=\; w\cdot \mathrm{CE}\big(T_c(\cdot\mid x_t,i,x_1)\,\|\,\pi_\theta(\cdot\mid x_t,i)\big)
= -w\sum_{v\in\mathcal{V}} T_c(v\mid x_t,i,x_1)\log \pi_\theta(v\mid x_t,i).
\]
Since $\nabla_\theta \mathrm{CE}(q\|p_\theta) = -\sum_v q(v)\,\nabla_\theta \log p_\theta(v)$, we have
\[
\nabla_\theta \ell_c(\theta)
= -w \sum_{v\in\mathcal{V}} T_c(v\mid x_t,i,x_1)\,\phi(v).
\]
For our concrete $T_c$ in \eqref{eq:c-DTM target},
\[
T_c(\cdot\mid x_t,i,x_1) = c\,\pi_a(\cdot\mid x_t,i)+\frac{w-c}{w}\,\mathbf{1}\{\,\cdot=x_1^i\,\},
\]
hence
\begin{align*}
\nabla_\theta \ell_c(\theta)
&= -w\Big(c\sum_{v}\pi_a(v\mid x_t,i)\phi(v) + \frac{w-c}{w}\phi(x_1^i)\Big) \\
&= -\Big(c\,w\,\mu(x_t,i) + (w-c)\,\phi(x_1^i)\Big).
\end{align*}
Therefore (ignoring the overall minus sign, which does not affect variance),
the single-sample gradient estimators for $c=0$ and $c=1$ are
\[
G_0 := w\,\phi(x_1^i),
\qquad
G_1 := w\,\mu(x_t,i) + (w-1)\,\phi(x_1^i).
\]

We compare $\mathrm{Var}(G_0)$ and $\mathrm{Var}(G_1)$ under the joint sampling of $(x_1,x_t)$ from $\mathbb P_a$. For small $h>0$, using the expansion $w=e^{hr(x_1)}=1+h r(x_1)+O(h^2)$, we obtain
\[
G_0 = \phi(x_1^i) + h r(x_1)\phi(x_1^i) + O(h^2),
\qquad
G_1 = \mu(x_t,i) + h r(x_1)\big(\mu(x_t,i)+\phi(x_1^i)\big) + O(h^2),
\]
where the $O(h^2)$ terms are in $L^2$ provided $r$ and $\phi$ have finite second moments.

At $h=0$, we have $G_0|_{h=0}=\phi(x_1^i)$ and $G_1|_{h=0}=\mu(x_t,i)=\mathbb E_a[\phi(x_1^i)\mid x_t,i]$.
By the law of total variance,
\[
\mathrm{Var}(\phi(x_1^i)) = \mathrm{Var}\big(\mathbb E_a[\phi(x_1^i)\mid x_t,i]\big) + \mathbb E\big[\mathrm{Var}(\phi(x_1^i)\mid x_t,i)\big]>\mathrm{Var}(\mu(x_t,i)),
\]
so $\mathrm{Var}(G_0)> \mathrm{Var}(G_1)$ holds at $h=0$. Since $G_0$ and $G_1$ depend smoothly on $h$ through $w=e^{hr(x_1)}$ and both admit $L^2$ Taylor expansions around $h=0$,
their variances are continuous in $h$ near $0$.
Therefore, the inequality $\mathrm{Var}(G_0)> \mathrm{Var}(G_1)$ persists for all sufficiently small $h>0$.
Averaging over the random choice of $(t,i)$ and integrating against the schedule weight (as in the full DTM loss) preserves the inequality, completing the proof.

This argument mirrors the control-variate variance comparison in Tilt Matching \citep{potaptchik2025tiltmatchingscalablesampling}
(Proposition~7 and its proof), adapted here to the discrete DTM cross-entropy gradient estimator.
\end{proof}

In order to prove Proposition~\ref{prop:DTM loss bounds KL}, we first introduce the following lemma that expresses the KL divergence between two CTMC path measures.
\begin{lemma}[CTMC path--KL for masked diffusion rates]\label{lem:ctmc-path-kl-masked}
Let $P$ and $\widehat P$ be path measures on a continuous-time Markov chain $\{X_t\}_{t\in[0,1]}$ with the same initial distribution and (possibly time-inhomogeneous) rate matrices $R_t$ and $\widehat R_t$.
Assume $\widehat R_t$ is absolutely continuous with respect to $R_t$, i.e. $R_t(x,y)>0 \Rightarrow \widehat R_t(x,y)>0$. Then the path-space KL admits the standard expansion
\begin{equation}
\mathrm{KL}(P\|\widehat P)=\mathbb{E}_{P}\Big[
\int_0^1\!\big(R_t(X_t,X_t)-\widehat R_t(X_t,X_t)\big)\,dt
\;+\!\!\sum_{t:\,X_t\neq X_{t^-}}\!\log\frac{R_t(X_{t^-},X_t)}{\widehat R_t(X_{t^-},X_t)}
\Big].
\label{eq:ctmc-path-kl-general}
\end{equation}
Moreover, suppose $R_t$ and $\widehat R_t$ are \emph{masked diffusion} rates of the form
\[
R_t(x, x[x^i\leftarrow v]) = \lambda(t)\,\mathbf 1\{x^i=\im\}\,\pi(v\mid x,i),\qquad
\widehat R_t(x, x[x^i\leftarrow v]) = \lambda(t)\,\mathbf 1\{x^i=\im\}\,\widehat\pi(v\mid x,i),
\]
for $\lambda(t)=\frac{\dot\alpha(t)}{1-\alpha(t)}$ and some posteriors $\pi(\cdot\mid x,i),\widehat\pi(\cdot\mid x,i)$ on $\mathcal V$, where $x[x^i\leftarrow v]$ denotes the partially masked sequence $x$ with $i$-th token replaced by $v$. Then
\begin{equation}
\mathrm{KL}(P\|\widehat P)=\int_0^1\lambda(t)\;
\mathbb{E}_{P}\Big[
\sum_{i:\,X_t^i=\im}\mathrm{KL}\big(\pi(\cdot\mid X_t,i)\,\|\,\widehat\pi(\cdot\mid X_t,i)\big)
\Big]dt.
\label{eq:ctmc-path-kl-masked}
\end{equation}
\end{lemma}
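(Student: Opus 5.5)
The proof has two parts. First we establish the general path-space formula \eqref{eq:ctmc-path-kl-general}. Then we specialize it to masked-diffusion rates and use the compensator of the jump process to turn the jump sum into a time integral.

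\textbf{Step 1: the general formula via Girsanov.} We discretize $[0,1]$ into a grid of mesh $\Delta t$. By the Markov property, the law of the path on the grid factorizes into one-step kernels
\[
\mathbb P[X_{t+\Delta t}=y\mid X_t=x]=\mathbf 1_{x=y}+\Delta t\,R_t(x,y)+o(\Delta t),
\]
and similarly for $\widehat R_t$. The log-likelihood ratio is then a sum of one-step log ratios.

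On steps with no jump, the log ratio is
\[
\log\frac{1+\Delta t R_t(x,x)}{1+\Delta t\widehat R_t(x,x)}=\Delta t\big(R_t(x,x)-\widehat R_t(x,x)\big)+o(\Delta t).
\]
On steps with a jump $x\to y$, it is $\log R_t(x,y)/\widehat R_t(x,y)+o(1)$.

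Summing and letting $\Delta t\to0$ gives the Radon--Nikodym derivative
\[
\log\frac{dP}{d\widehat P}=\int_0^1\big(R_t-\widehat R_t\big)(X_t,X_t)\,dt+\sum_{\text{jumps}}\log\frac{R_t}{\widehat R_t}.
\]
This is the standard Girsanov formula for jump processes. Absolute continuity guarantees that every jump $P$ makes has positive $\widehat R_t$-rate, so each log term is finite. Taking $\mathbb E_P$ yields \eqref{eq:ctmc-path-kl-general}.

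The main technical obstacle is justifying the limit, i.e.\ exchanging the limit with the expectation. I would simply cite the standard result for finite-state time-inhomogeneous CTMCs: the state space is finite and the rates are bounded on $[0,1-\epsilon]$. Near $t=1$, where $\lambda(t)$ may blow up, I would handle the endpoint by monotone convergence as $\epsilon\to0$.

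\textbf{Step 2: the compensator.} For any bounded function $g_t(x,y)$, the process
\[
\sum_{\text{jumps}}g_t(X_{t^-},X_t)-\int_0^1\sum_{y\neq X_t}R_t(X_t,y)\,g_t(X_t,y)\,dt
\]
is a $P$-martingale. Applying this with $g=\log(R/\widehat R)$ replaces the expected jump sum by
\[
\mathbb E_P\int_0^1\sum_{y\neq X_t}R_t(X_t,y)\log\frac{R_t(X_t,y)}{\widehat R_t(X_t,y)}\,dt .
\]

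\textbf{Step 3: specialize to masked-diffusion rates.} Since $\pi$ and $\widehat\pi$ are both probability distributions on $\mathcal V$, the diagonal entries coincide:
\[
R_t(x,x)=-\lambda(t)\,\#\{i:x^i=\im\}=\widehat R_t(x,x).
\]
Hence the drift term vanishes.

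For the jump term, the only transitions are $y=x[x^i\leftarrow v]$ with $x^i=\im$, and for these
\[
\frac{R_t(x,y)}{\widehat R_t(x,y)}=\frac{\pi(v\mid x,i)}{\widehat\pi(v\mid x,i)},
\]
because the factor $\lambda(t)$ cancels. The compensated sum therefore equals
\[
\lambda(t)\sum_{i:\,x^i=\im}\ \sum_{v\in\mathcal V}\pi(v\mid x,i)\log\frac{\pi(v\mid x,i)}{\widehat\pi(v\mid x,i)}=\lambda(t)\sum_{i:\,x^i=\im}\mathrm{KL}\big(\pi(\cdot\mid x,i)\,\|\,\widehat\pi(\cdot\mid x,i)\big).
\]
Finally, Fubini exchanges $\mathbb E_P$ with $\int dt$; this is legitimate because the integrand is nonnegative. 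The result is \eqref{eq:ctmc-path-kl-masked}.
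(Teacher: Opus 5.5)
Your proposal is correct and follows essentially the same route as the paper: cite the CTMC Girsanov formula, observe that the diagonal (total exit-rate) terms coincide because both posteriors sum to one, cancel $\lambda(t)$ in the jump log-ratios, and turn the expected jump sum into $\lambda(t)\sum_{i}\mathrm{KL}(\pi\|\widehat\pi)$. Your explicit compensator (L\'evy-system) identity and the truncation at $1-\epsilon$ with monotone convergence make rigorous two things the paper handles informally: it says only ``conditioning on $X_t$'' for the jump-sum step, and it ignores the blow-up of $\lambda(t)$ near $t=1$.
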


\begin{proof}
Equation \eqref{eq:ctmc-path-kl-general} is the standard Radon--Nikodym / Girsanov formula for CTMCs. See \citet{kim2025flexmdm} Proposition 5, \citet{shaul2025flowmatching} Appendix D. See also \citet{campbell2024dfm} Appendix C.1 for a more comprehensive treatment.

We now specialize \eqref{eq:ctmc-path-kl-general} to the masked diffusion rates. For any state $x$, the total exit rate under $R_t$ is
\[
\sum_{y\neq x} R_t(x,y)=\sum_{i:\,x^i=\im}\sum_{v\in\mathcal V}\lambda(t)\pi(v\mid x,i)
=
\lambda(t)\cdot \#\{i:\,x^i=\im\}.
\]
The same holds for    $\widehat R_t$ because $\sum_v \widehat\pi(v\mid x,i)=1$:
\[
\sum_{y\neq x} \widehat R_t(x,y)=\lambda(t)\cdot \#\{i:\,x^i=\im\}.
\]
Hence the diagonal terms coincide:
\[
R_t(x,x)=-\sum_{y\neq x}R_t(x,y)= -\lambda(t)\#\{i:\,x^i=\im\}
=
\widehat R_t(x,x),
\]
so the integral term in \eqref{eq:ctmc-path-kl-general} vanishes identically.

Next, any jump $X_{t^-}\to X_t$ must be of the form $X_t=X_{t^-}[X_{t^-}^i\leftarrow v]$ for a unique masked coordinate $i$.
For such a jump,
\[
\log\frac{R_t(X_{t^-},X_t)}{\widehat R_t(X_{t^-},X_t)}
=
\log\frac{\lambda(t)\pi(v\mid X_{t^-},i)}{\lambda(t)\widehat\pi(v\mid X_{t^-},i)}
=
\log\frac{\pi(v\mid X_{t^-},i)}{\widehat\pi(v\mid X_{t^-},i)}.
\]
Plugging this into \eqref{eq:ctmc-path-kl-general}, conditioning on $X_t$ (equivalently $X_{t^-}$ a.s. for CTMCs),
and using that under $P$ the revealed token at a masked position has law $\pi(\cdot\mid X_t,i)$,
the expected jump contribution at time $t$ becomes
$\lambda(t)\sum_{i:\,X_t^i=\im}\mathrm{KL}(\pi(\cdot\mid X_t,i)\|\widehat\pi(\cdot\mid X_t,i))$.
Integrating over $t$ yields \eqref{eq:ctmc-path-kl-masked}.
\end{proof}

\boundKL*
\noindent\hyperref[prop:DTM loss bounds KL]{(Back to Proposition~\ref*{prop:DTM loss bounds KL} in the main text.)}
\begin{proof}
As shown in the main text, the $c$-DTM objective employs a tractable random target as an unbiased estimate of the intractable $\pi_{a+h}$. Thus for theoretical analysis on $\rho_{1,a+h}$, it would be helpful to reduce to the ideal loss.
\paragraph{Step 1: Replace the random target.}
First, we reduce the random target $T_c$ to the true posterior $\pi_{a+h}$. By \eqref{eq:no target}, the $c$-DTM objective \eqref{eq:c-DTM objective} becomes
\begin{equation}
\label{eq:reduce-Tc}
\mathcal L^{c\text{-DTM}}_{a\to a+h}(\theta)
=
\int_0^1 \frac{\dot\alpha(t)}{1-\alpha(t)}\;
\mathbb E_{a}\Big[
\sum_{i:\,x_t^i=\im}
\mathbb E_a\!\left[e^{h r(x_1)}\mid x_t\right]\,
\mathrm{CE}\big(\pi_{a+h}(\cdot\mid x_t,i)\mid\pi_\theta(\cdot\mid x_t,i)\big)
\Big]dt.
\end{equation}

\paragraph{Step 2: Change measure from $\mathbb P_a$ to $\mathbb P_{a+h}$.}
Define the ``ideal'' (but still trajectory-based) cross-entropy functional
\[
\widetilde L_{a+h}(\theta)
:=
\int_0^1 \frac{\dot\alpha(t)}{1-\alpha(t)}\;
\mathbb E_{a+h}\Big[
\sum_{i:\, x_t^i=\im}
\mathrm{CE}\big(\pi_{a+h}(\cdot\mid x_t,i)\,\big\|\,\pi_\theta(\cdot\mid x_t,i)\big)
\Big]dt.
\]
Applying the change-of-measure \eqref{eq:unconditional change of measure} with
\[\varphi
=
\int_0^1 \frac{\dot\alpha(t)}{1-\alpha(t)} \sum_{i:\,x_t^i=\im}
\mathrm{CE}(\pi_{a+h}(\cdot\mid x_t,i)\|\pi_\theta(\cdot\mid x_t,i))\,dt\]
and comparing with \eqref{eq:reduce-Tc}, we obtain the relation
\begin{equation}
\label{eq:Z-scaling}
\mathcal L^{c\text{-DTM}}_{a\to a+h}(\theta)
=
Z\cdot\widetilde L_{a+h}(\theta).
\end{equation}
In particular, if $\theta^\star$ is the optimizer such that $\pi_{\theta^\star}=\pi_{a+h}$,
then $\widetilde L_{a+h}(\theta^\star)=\min_\theta \widetilde L_{a+h}(\theta)$ and
\[
\mathcal L^{c\text{-DTM}}_{a\to a+h}(\theta)-\mathcal L^{c\text{-DTM}}_{a\to a+h}(\theta^\star)
=
Z\big(\widetilde L_{a+h}(\theta)-\widetilde L_{a+h}(\theta^\star)\big).
\]

\paragraph{Step 3: Excess cross-entropy equals a path KL, which upper-bounds the terminal KL.}
Let $\mathbb P_{a+h}$ denote the \emph{true} path measure under tilt $a+h$, and let $\mathbb P_\theta$ denote the path measure induced by running the CTMC with the same interpolant but using the learned posterior $\pi_\theta$. It remains to express the difference $\widetilde L_{a+h}(\theta)-\widetilde L_{a+h}(\theta^\star)$ in terms of the KL divergence between the two path measures.

Recall from Section~\ref{subsec:reward-tilting} that the discrete stochastic interpolant, or intuitively the unmasking-event mechanism, is itself independent of $\theta$ or the tilt. So the likelihood ratio between $\mathbb P_{a+h}$ and $\mathbb P_\theta$ depends on $\theta$ \emph{only} through the product of the local choice probabilities $\pi_\theta(\cdot\mid x_t,i)$ at unmasking events.
Consequently, by Lemma~\ref{lem:ctmc-path-kl-masked} with $\pi=\pi_{a+h}$ and $\widehat\pi=\pi_\theta$, expanding the $\mathrm{KL}$ divergence between path measures gives
\begin{equation}\label{eq:path-KL-local}
\mathrm{KL}(\mathbb P_{a+h}\|\mathbb P_\theta)
=
\int_0^1 \frac{\dot\alpha(t)}{1-\alpha(t)}\,
\mathbb{E}_{a+h}\Big[
\sum_{i:\,X_t^i=\im}\mathrm{KL}\big(\pi_{a+h}(\cdot\mid X_t,i)\,\|\,\pi_\theta(\cdot\mid X_t,i)\big)
\Big]dt.
\end{equation}
On the other hand,
\[
\mathrm{CE}(\pi_{a+h}\|\pi_\theta)-\mathrm{CE}(\pi_{a+h}\|\pi_{a+h})
=
\mathrm{KL}(\pi_{a+h}\|\pi_\theta),
\]
so by definition of $\widetilde L_{a+h}$ and \eqref{eq:path-KL-local},
\begin{equation}
\label{eq:excess-loss-path-KL}
\widetilde L_{a+h}(\theta)-\widetilde L_{a+h}(\theta^\star)
=
\mathrm{KL}(\mathbb P_{a+h}\|\mathbb P_\theta).
\end{equation}
Finally, the terminal marginal $x_1$ is a measurable function of the full trajectory,
so by the data processing inequality,
\begin{equation}
\label{eq:dp}
\mathrm{KL}(\rho_{1,a+h}\|\rho_\theta)
\le
\mathrm{KL}(\mathbb P_{a+h}\|\mathbb P_\theta).
\end{equation}
Combining \eqref{eq:Z-scaling}, \eqref{eq:excess-loss-path-KL}, and \eqref{eq:dp} yields
\[
\mathrm{KL}(\rho_{1,a+h}\|\rho_\theta)
\le
\widetilde L_{a+h}(\theta)-\widetilde L_{a+h}(\theta^\star)
=
\frac{1}{Z}\Big(
L^{c\text{-DTM}}_{a\to a+h}(\theta)-L^{c\text{-DTM}}_{a\to a+h}(\theta^\star)
\Big),
\]
which is exactly \eqref{eq:DTM loss bounds KL}.
\end{proof}

\section{DTM with semi-autoregressive (SAR) decoding}
\label{app:SAR DTM}
We formalize how to align DTM training with semi-autoregressive (SAR) / block-diffusion decoding. The main text notes that DTM is minimized pointwise in $(t,x_t,i)$ and therefore adapts to alternative decoding constraints by (i) changing the distribution of conditioning states $x_t$ to those reachable under SAR and (ii) restricting eligible update indices to the currently active block. Here we specify a SAR-compatible stochastic interpolant, derive the corresponding hazard/time-weighting factor, and state the resulting SAR-aligned $c$-DTM objective.
\subsection{SAR Stochastic Interpolant}
Fix a sequence length $L$ and block size $B$, and let $M:=L/B$ (assume $B| L$ for simplicity). Define blocks
$\mathcal J_b:=\{bB+1,\dots,(b+1)B\}$ for $b\in\{0,\dots,M-1\}$.
We partition global time $t\in[0,1]$ into $M$ block-intervals by defining the active block index
\[
b(t):=\lfloor Mt\rfloor\in\{0,\dots,M-1\},
\qquad
u(t):=Mt-b(t)\in[0,1),
\]
where $u(t)$ is the rescaled (within-block) time.

Let $\alpha:[0,1]\to[0,1]$ be a smooth increasing schedule with $\alpha(0)=0$ and $\alpha(1)=1$.
Given a clean terminal sequence $x_1\sim \rho_{1,a}$, sample i.i.d.\ reveal times
$\{T^i\}_{i\in[L]}$ with CDF $\mathbb P[T^i<t]=\alpha(t)$.
Define the SAR-masked state $x_{t,a}^{\mathrm{SAR}}$ coordinate-wise by
\begin{equation}
x_{t,a}^{\mathrm{SAR},i}:=
\begin{cases}
x_1^i, & i\in \bigcup_{b'<b(t)} \mathcal J_{b'} \quad \text{(past blocks fully revealed)},\\[2pt]
x_1^i, & i\in \mathcal J_{b(t)} \ \text{and}\ u(t)\ge T^i \quad \text{(revealed within current block)},\\[2pt]
\im, & i\in \mathcal J_{b(t)} \ \text{and}\ u(t)< T^i \quad \text{(still masked within current block)},\\[2pt]
\im, & i\in \bigcup_{b'>b(t)} \mathcal J_{b'} \quad \text{(future blocks fully masked)}.
\end{cases}
\label{eq:sar_interpolant}
\end{equation}
Let $\rho_{t,a}^{\mathrm{SAR}}:=\mathrm{Law}(x_{t,a}^{\mathrm{SAR}})$ denote the induced marginal distribution.
\paragraph{Eligible indices.}
Under \eqref{eq:sar_interpolant}, only masked positions in the \emph{current} block are eligible for an update at time $t$:
\[
\mathcal I(t,x_t):=\{\, i\in \mathcal J_{b(t)} : x_t^i=\im \,\}.
\]

\subsection{Hazard Factor under the SAR Interpolant}
For a token in the active block, the event ``still masked at local time $u$'' has probability $1-\alpha(u)$,
and the conditional reveal probability in $[u,u+du]$ is the hazard
\[
\lambda(u)\,du
\quad\text{with}\quad
\lambda(u)=\frac{\alpha'(u)}{1-\alpha(u)}.
\]
Because $u(t)=Mt-b(t)$, we have $du = M\,dt$ on each block interval, hence the global-time hazard is
\begin{equation}
\lambda_{\mathrm{SAR}}(t)
=
M\cdot\frac{\alpha'(u(t))}{1-\alpha(u(t))}.
\label{eq:sar_hazard}
\end{equation}
This is the analogue of the factor $\alpha'(t)/(1-\alpha(t))$ that appears in Proposition~\ref{prop:DTM loss bounds KL} for the any-order interpolant, with an additional time-change due to the blockwise parametrization.
\paragraph{Simplified choice.}
If desired, one may adopt $\alpha(u)=u$ (uniform reveal times within each block), in which case
\(
\lambda_{\mathrm{SAR}}(t)=\frac{M}{1-u(t)}
\)
and the weighting is fully explicit.

\subsection{SAR-aligned $c$-DTM Objective}
Let $\mathbb P_a^{\mathrm{SAR}}$ denote the path measure induced by the SAR interpolant \eqref{eq:sar_interpolant}
(with terminal $x_1\sim \rho_{1,a}$), and write $\mathbb E_a^{\mathrm{SAR}}[\cdot]$ for expectation under this measure.
Define the (tilted) unmasking posterior under SAR as
\[
\pi_a^{\mathrm{SAR}}(v\mid x_t,i):=\mathbb P_a^{\mathrm{SAR}}[x_1^i=v\mid x_t],\qquad v\in\mathcal V.
\]
Then the SAR-aligned $c$-DTM objective is
\begin{equation}
\mathcal L^{\mathrm{SAR}}_{c\text{-DTM}}(\theta)
=
\int_0^1
\lambda_{\mathrm{SAR}}(t)\;
\mathbb E_a^{\mathrm{SAR}}\Big[
\sum_{i\in \mathcal I(t,x_t)}
e^{h r(x_1)}\,
\mathrm{CE}\big(T_c(\cdot\mid x_t,i,x_1)\,\big\|\,\pi_\theta(\cdot\mid x_t,i)\big)
\Big]dt,
\label{eq:sar_cdtm}
\end{equation}
where $\mathcal I(t,x_t)=\{i\in\mathcal J_{b(t)}:x_t^i=\im\}$ and $T_c$ is defined as in \eqref{eq:c-DTM target}.

\subsection{Why SAR Preserves Exactness}
The Esscher change-of-measure arguments in Section~\ref{sec:DTM} depend only on (i) the terminal tilt
$\rho_{1,a+h}(x)\propto \rho_{1,a}(x)e^{h r(x)}$ and (ii) the fact that the conditional law of $x_t$ given $x_1$ is independent of the tilt parameter $a$.
Both properties remain true for the SAR interpolant \eqref{eq:sar_interpolant}, since the masking schedule and block structure do not depend on $a$.
Therefore Proposition~\ref{prop:esscher} (conditional tilting) and Proposition~\ref{prop:objective} (pointwise minimizer of $c$-DTM) carry over verbatim with $\mathbb P_a^{\mathrm{SAR}}$ in place of $\mathbb P_a$.
Finally, Proposition~3.4’s KL control argument extends by replacing the any-order hazard $\alpha'(t)/(1-\alpha(t))$ with the SAR hazard \eqref{eq:sar_hazard}, yielding the weighting in \eqref{eq:sar_cdtm}.

\begin{figure*}[!t]
    \centering
    \begin{subfigure}[t]{0.45\linewidth}
        \centering
        \includegraphics[width=\linewidth]{subplot2.png}
        \label{fig:line_b}
        \caption{Proportions of valid paths against the degrees of tilt}
    \end{subfigure}
    \begin{subfigure}[t]{0.45\linewidth}
        \centering
        \includegraphics[width=\linewidth]{subplot3.png}
        \label{fig:line_c}
        \caption{Mean reward of paths against degrees of tilt}
    \end{subfigure}
    \begin{subfigure}[t]{0.45\linewidth}
        \centering
        \bigskip
        \includegraphics[width=\linewidth]{subplot1.png}
        \label{fig:line_a}
        \caption{Diversity of paths measured by coverage ratio, which is the ratio between the total number of cells that paths cover and number of cells the longest path covers.}
    \end{subfigure}
    \caption{Proportion of valid paths ($a$) mean rewards ($b$), and diversity of paths ($c$) against degrees of tilt $a$. Our model is trained on three sets of control variate $c$ and annealing steps $h$. Small step size and control variate 1 has higher path diversity, validity and rewards.}
    \label{fig:line}
\end{figure*}

\begin{figure*}[!t]
    \centering
    \begin{subfigure}[t]{0.4\linewidth}
        \centering
        \includegraphics[width=\linewidth]{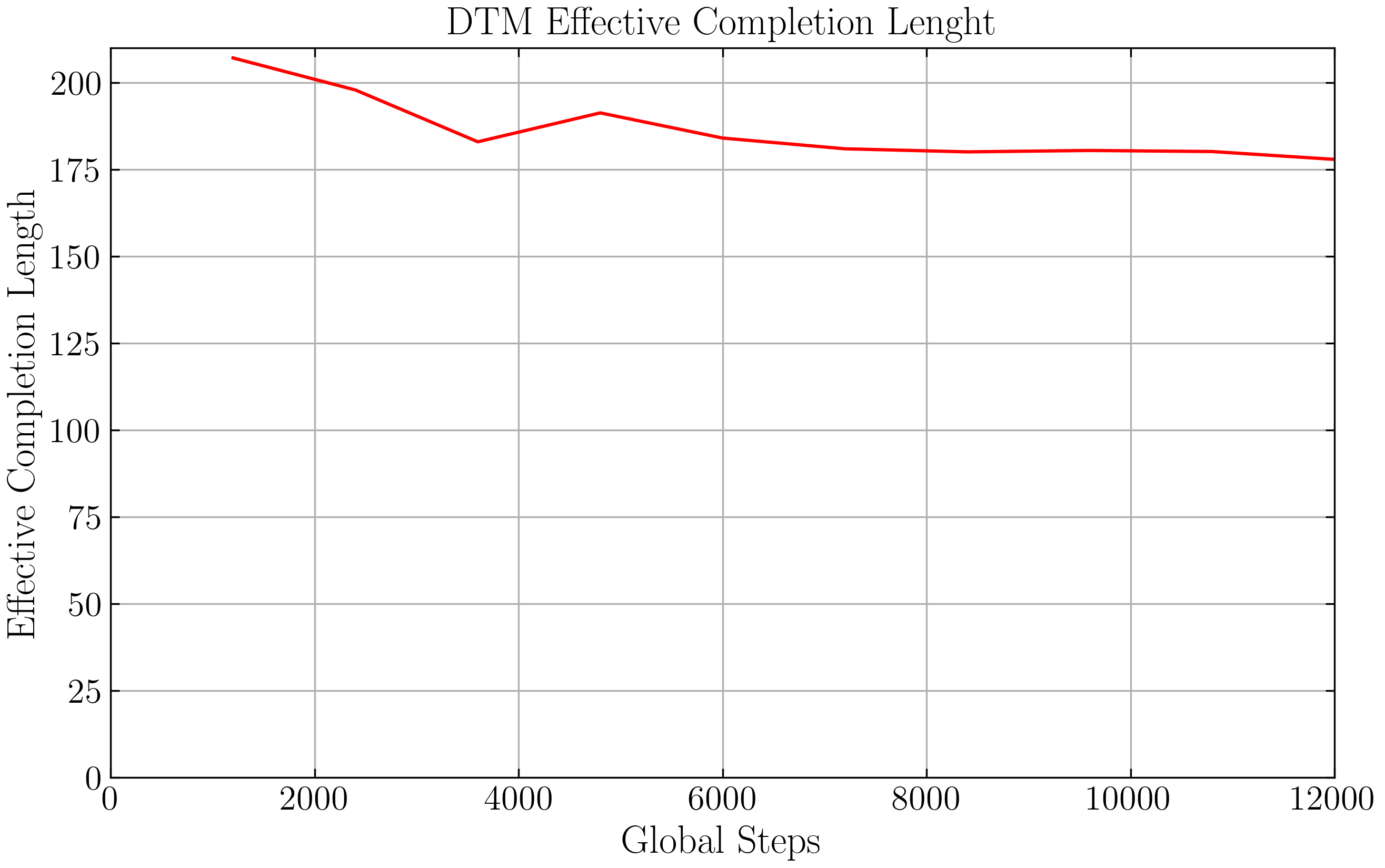}
        \label{fig:effective_gen_length_a}
        \caption{Effective generation length of DTM on Countdown}
    \end{subfigure}
    \begin{subfigure}[t]{0.3\linewidth}
        \centering
        \includegraphics[width=\linewidth]{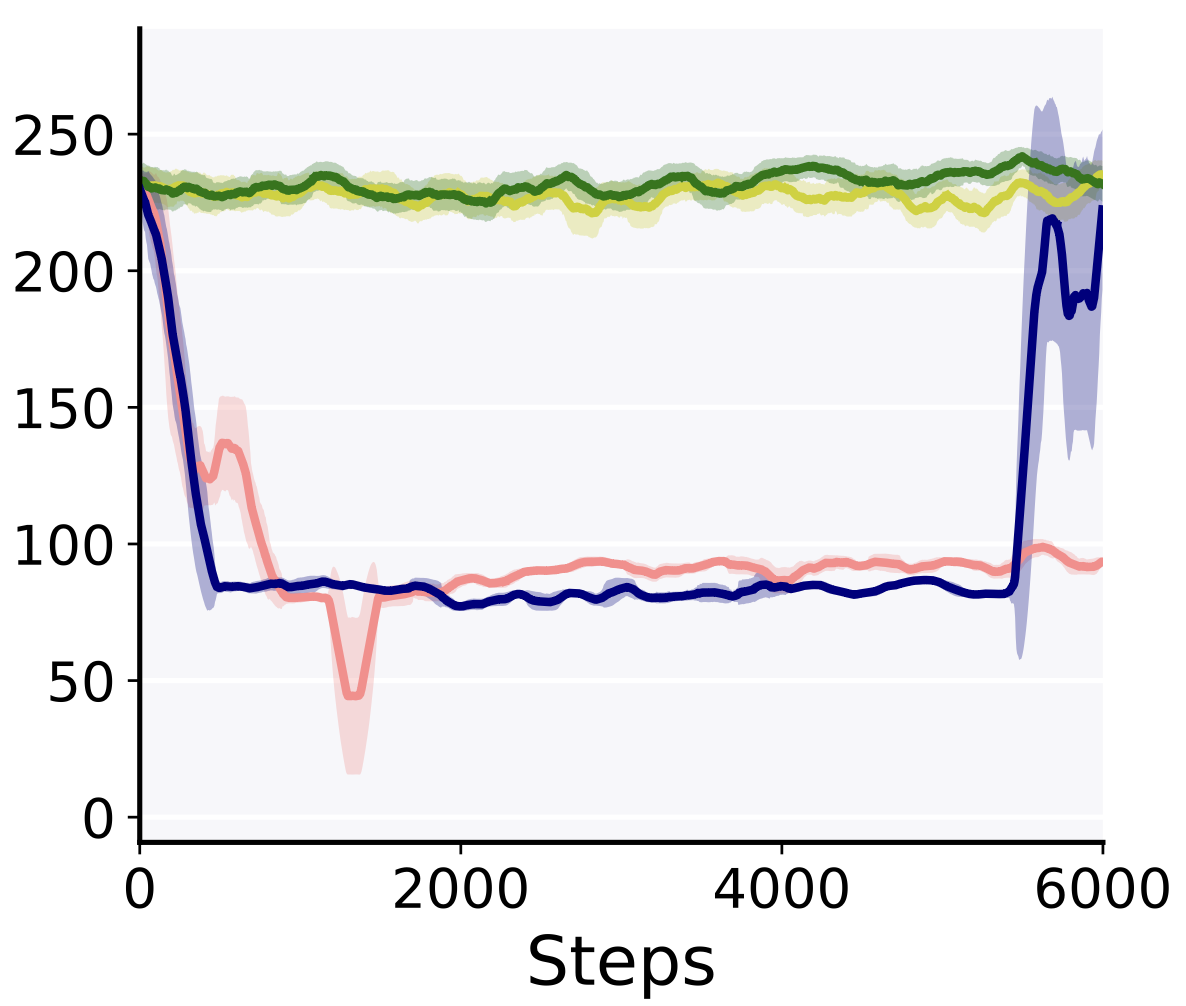}
        \label{fig:effective_gen_length_b}
        \caption{d1 (light green), WD1 (red), UniGRPO (dark green), SPG (blue) (from Fig. 9 in \citet{wang2025spg})}
    \end{subfigure}
    \caption{Effective generation length of DTM versus RL baselines. With direct training on state-level posteriors, DTM achieves stable reasoning.}
    \label{fig:effective_gen_length}
\end{figure*}

\section{Experimental Details}\label{app:experiments}
\subsection{Datasets and Reward Functions}\label{app:rewards}
We summarize the datasets and reward shaping used for DTM fine-tuning.
\paragraph{Maze Planning}
We fix a randomly generated 41-by-41 maze with door fraction 0.4 (see Fig.~\ref{fig:maze_empty}), and train an 8.9M base model on it using the standard ELBO-based variational loss \eqref{eq:MDM loss}. For two cells $z=(x,y)$ and $z'=(x',y')$ for some integers $x,x',y,y'\in[0, 40]$, we define the Manhattan distance \[d(z,z'):=|x-x'|+|y-y'|,\] i.e. the $L^1$-distance. For each completion of the form $z=(\texttt s,\texttt g,\texttt{SEP},z_1,z_2,...,z_n,\texttt{PAD},...,\texttt{PAD})$, we say the path is valid if $z_1=\texttt s$, $z_n=\texttt g$, all $z_i$'s are non-wall cells in the maze, and all consecutive cells $(z_i,z_{i+1})$ are direct neighbors in the maze (i.e. with Manhattan distance 1). The base model produces valid paths with probability 0.313 under sampling temperature 1.0.

\begin{figure*}[!t]
    \centering
    \includegraphics[width=0.2\linewidth]{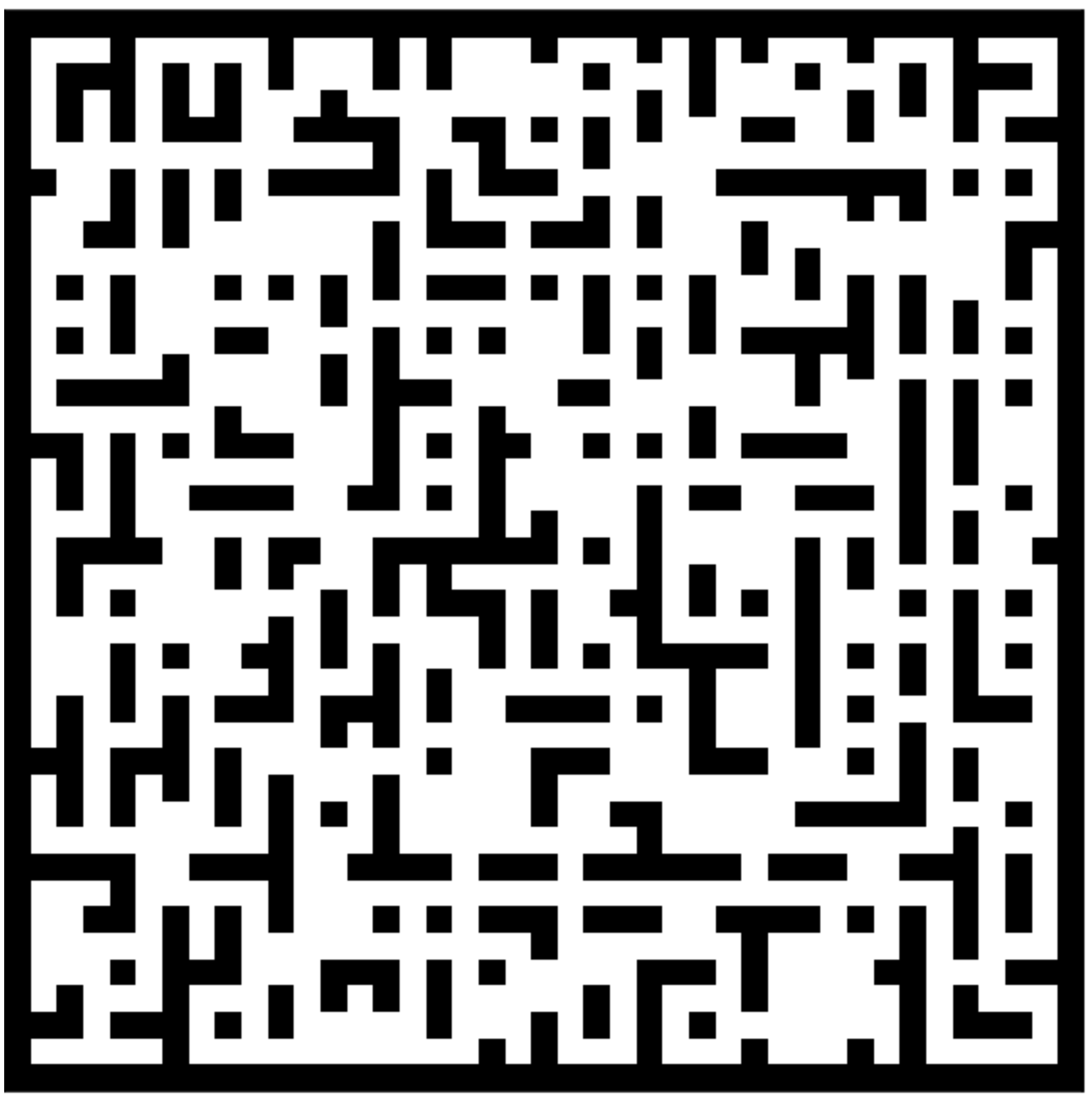}
    \caption{The fixed 41-by-41 maze with door fraction 0.4.}
    \label{fig:maze_empty}
\end{figure*}

We define the stay-away-from-center reward by \[r_{\text{stay-away-from-center}}(z)=\begin{cases}
    \frac{1}{20}\min_{i\in[n]}d(z_i,\texttt{center}), &\text{ if $z$ is valid}\\
    -0.4, &\text{ otherwise}\\
\end{cases}\]
where $\texttt{center}=(20, 20)$ is the center of the maze, and we divide by 20 to signify the distance relative to the maximum attainable distance from the center.

\paragraph{MATH500} 
We use the pre-defined train split of the MATH500 dataset\footnote{https://huggingface.co/datasets/ankner/math-500} for rollouts during training, and evaluate model performance on the test split. Same as in the baselines, the prompts we used are the same during training and evaluation, as shown in Figure~\ref{fig:math500-prompt}.

For the reward functions, we follow SPG but slightly adjust the scale, combining a correctness term with a piecewise format score:
\begin{itemize}
  \item \textbf{Correctness reward:} $+1.0$ if the expression inside \verb|\boxed{...}|
  matches the correct answer; $-1.0$ otherwise.
  \item \textbf{Format reward:} assign
  \[
  r_{\text{fmt}}=
  \begin{cases}
    0.375, & \text{if }\verb|<answer>...</answer>| \text{ is present and contains } \verb|\boxed|,\\
    0.25, & \text{if answer tags are present but no } \verb|\boxed|,\\
    0.125, & \text{if no answer tags but } \verb|\boxed| \text{ is present},\\
    0.0, & \text{if neither answer tags nor } \verb|\boxed| \text{ is present}.
  \end{cases}
  \]
\end{itemize}
The training reward is just the sum of the correctness and format rewards.

\begin{figure}[t]
\centering
\fbox{\begin{minipage}{0.95\linewidth}
\footnotesize\ttfamily
You are a math expert. You will be given a question to solve. Solve it step by step.
Wrap the final answer in a \textbackslash boxed\{\}.\\
Respond in the following format:\\
\textless reasoning\textgreater\\
Your reasoning here\\
\textless /reasoning\textgreater\\
\textless answer\textgreater\\
\textbackslash boxed\{...\}\\
\textless /answer\textgreater\\\\
\{Actual problem statement inserted here verbatim\}
\end{minipage}}
\caption{Prompt used for MATH500 and GSM8K. The problem statement is appended directly after the template.}
\label{fig:math500-prompt}
\end{figure}

\paragraph{Countdown}
We follow all the baselines exactly: for training rollouts, we use the train split of the Countdown dataset\footnote{https://huggingface.co/datasets/Jiayi-Pan/Countdown-Tasks-3to4} and restrict to instances that use only three numbers; for evaluation, we evaluate on 256 synthetically generated countdown questions with 3 numbers. The same prompts are used for training and evaluation, as shown in Figure~\ref{fig:countdown-prompt}.

\begin{figure}[t]
\centering
\fbox{\begin{minipage}{0.95\linewidth}
\footnotesize\ttfamily
Respond in the following format:\\
\textless reasoning\textgreater\\
...\\
\textless /reasoning\textgreater\\
\textless answer\textgreater\\
...\\
\textless /answer\textgreater\\\\
Using only the numbers [38, 92, 52] create an arithmetic expression that evaluates to exactly 78.
You must use all numbers from the list, and each number must be used exactly once.
You may use the operations +, -, *, and / as needed.
After reasoning, provide only your final expression inside \textless answer\textgreater\textless /answer\textgreater\ tags
without including an equals sign or the target number.\\
For example, if the numbers are [2, 3, 4] and the target is 5, a valid answer is:\\
\textless answer\textgreater\\
2*4-3\\
\textless /answer\textgreater
\end{minipage}}
\caption{Prompt used for Countdown. In each instance, the list \texttt{[38, 92, 52]} is replaced by the provided numbers in the actual question and \texttt{78} is replaced by the actual target value.}
\label{fig:countdown-prompt}
\end{figure}

The reward consists of a soft correctness score based on whether the produced expression (i) uses exactly the provided numbers and (ii) reaches the target:
\[
r=
\begin{cases}
  1.0, & \text{if the expression hits the target and uses only the provided numbers},\\
  0.1, & \text{if it uses the correct numbers but does not reach the target},\\
  0.0, & \text{otherwise}.
\end{cases}
\]
\paragraph{Sudoku}
We experiment on the 4$\times$4 Sudoku dataset\footnote{\url{https://github.com/Black-Phoenix/4x4-Sudoku-Dataset}}. We adopted SPG's modification on the original split to avoid train-test leakage: the dataset contains 1M puzzles spanning all 288 possible completed 4$\times$4 solution grids. They randomly selected 200 solutions and included all puzzles whose ground-truth completion is one of these solutions in the training set (694{,}006 puzzles). For evaluation, they sampled 2-3 puzzles from each of the remaining 88 solutions, forming a 256-instance test set.

We use the same prompt format for training and evaluation as shown in Figure~\ref{fig:sudoku-prompt}; we also follow SPG and use their 3-shot prompts for Sudoku (inserting three fixed puzzle/solution exemplars) because the base model performs poorly in the zero-shot setting. It is ensured that (i) the solutions shown in the 3-shot exemplars do not appear among the test-set solutions, and (ii) there is no puzzle overlap between train and test.

The reward is given by twice the fraction of correctly filled cells in each puzzle.

\paragraph{GSM8K} We use the train split of the GSM8K dataset \citep{gsm8k} with the same prompt as MATH500 as in Figure~\ref{fig:math500-prompt}. Our reward is slightly shifted compared to SPG. We also removed the soft and strict format reward since they are mostly zero in the training of DTM and SPG. Our reward function consists of:
\begin{itemize}
    \item \textbf{Correctness reward:} $+2.0$ if the expression inside \verb|\boxed{...}|
  matches the correct answer.
    \item \textbf{XML structure reward:} $+0.125$ for correct formatting tag, with small penalties for contents after the closing tag.
    \item \textbf{Integer answer reward:} $-0.5$ penalty if the answer is not an integer.
\end{itemize}

\begin{figure}[t]
\centering
\fbox{\begin{minipage}{0.95\linewidth}
\footnotesize\ttfamily
Please solve the following 4x4 Sudoku puzzle. The puzzle is provided as a 16-character string
reading left-to-right, top-to-bottom, where `0' represents empty cells.\\\\
Rules:\\
- Fill empty cells with digits 1-4\\
- Each row/column must contain digits 1-4 exactly once\\
- Each 2x2 box must contain digits 1-4 exactly once\\\\
Important: Your solution must be a COMPLETE 16-character string with only digits 1-4.\\\\
Respond in this exact format:\\
$<$reasoning$>$\\
Your step-by-step solving process\\
$<$/reasoning$>$\\
$<$answer$>$\\\relax
[16-character solution string, digits 1--4 only]\\
$<$/answer$>$\\\\
\{3-SHOT EXAMPLES INSERTED HERE\}\\\\
Question: Solve the following Sudoku puzzle: \texttt{\{ACTUAL PUZZLE HERE\}}\\
Answer:  
\end{minipage}}
\caption{Sudoku prompt template. We use 3-shot prompting: three solved puzzle exemplars are inserted; the evaluation set uses disjoint underlying solutions from the exemplars. To avoid repetition, we refer to Appendix D.3 of \citet{wang2025spg} for the 3 exemplars.}
\label{fig:sudoku-prompt}
\end{figure}



\subsection{Hyperparameters and Implementation Details}
Following the baselines, we employ LoRA with a rank of $r=128$ and scaling factor $\alpha=64$, 4-bit quantization \citep{dettmers2023qlora}, and use the AdamW optimizer\citep{loshchilov2019adamw},
with $\beta_1=0.9$, $\beta_2=0.99$, and weight decay of 0.1.  We conducted training on 8 NVIDIA H100-80G GPU, with the following hyperparameters:

For MATH500, we trained with a per-gpu buffer of 60 distinct prompts with 8 completions per prompt, and refresh a quarter every 50 steps. We use an annealing step size $h=1$, performing 400 gradient update steps per $h$ phase with per-gpu batch size 72. We use a learning rate of $1\times 10^{-5}$, with gradient clip at 1.0.

For Sudoku, we trained with a per-gpu buffer of 80 distinct prompts with 1 completion per prompt, and refresh a quarter every 32 steps. We use an annealing step size $h=5$, performing 256 gradient update steps per $h$ phase with per-gpu batch size 64. We use a learning rate of $3\times 10^{-6}$, with gradient clip at 2.0.

For Countdown, we trained with a per-gpu buffer of 80 distinct prompts with 6 completions per prompt, and refresh a quarter every 25 steps. We use an annealing step size $h=6$, performing 800 gradient update steps per $h$ phase with per-gpu batch size 96. We use a learning rate of $4\times 10^{-6}$, with gradient clip at 2.0.

For GSM8K, we trained with a per-gpu buffer of 80 distinct prompts with 6 completions per prompt, and refresh a quarter every 50 steps. We use an annealing step size $h=1$, performing 800 gradient update steps per $h$ phase with per-gpu batch size 96. We use a learning rate of $5\times 10^{-5}$, with gradient clip at 2.0.

\subsection{Ablation on using SAR-aligned objective}
We perform an ablation on Countdown comparing training with the default random interpolant as in \eqref{eq:tilted_interpolant_a} and \eqref{eq:c-DTM objective} against SAR-aligned DTM as in \eqref{eq:sar_interpolant} and \eqref{eq:sar_cdtm}, while fixing all other hyperparameters. As shown in Figure~\ref{fig:sar_ablation}, SAR-aligned DTM reaches $81\%$ accuracy versus $67\%$ for the random-interpolant version. This shows that if training optimizes over arbitrary masked states that never arise at inference, compute is wasted on irrelevant trajectories. By aligning DTM with the SAR interpolant used at test time, we concentrate training on the states that actually matter for inference while preserving the same theoretical target. Such flexibility of being aligned with SAR sampling is a practical advantage of the DTM framework.
\begin{figure*}[!t]
    \centering
    \includegraphics[width=0.5\linewidth]{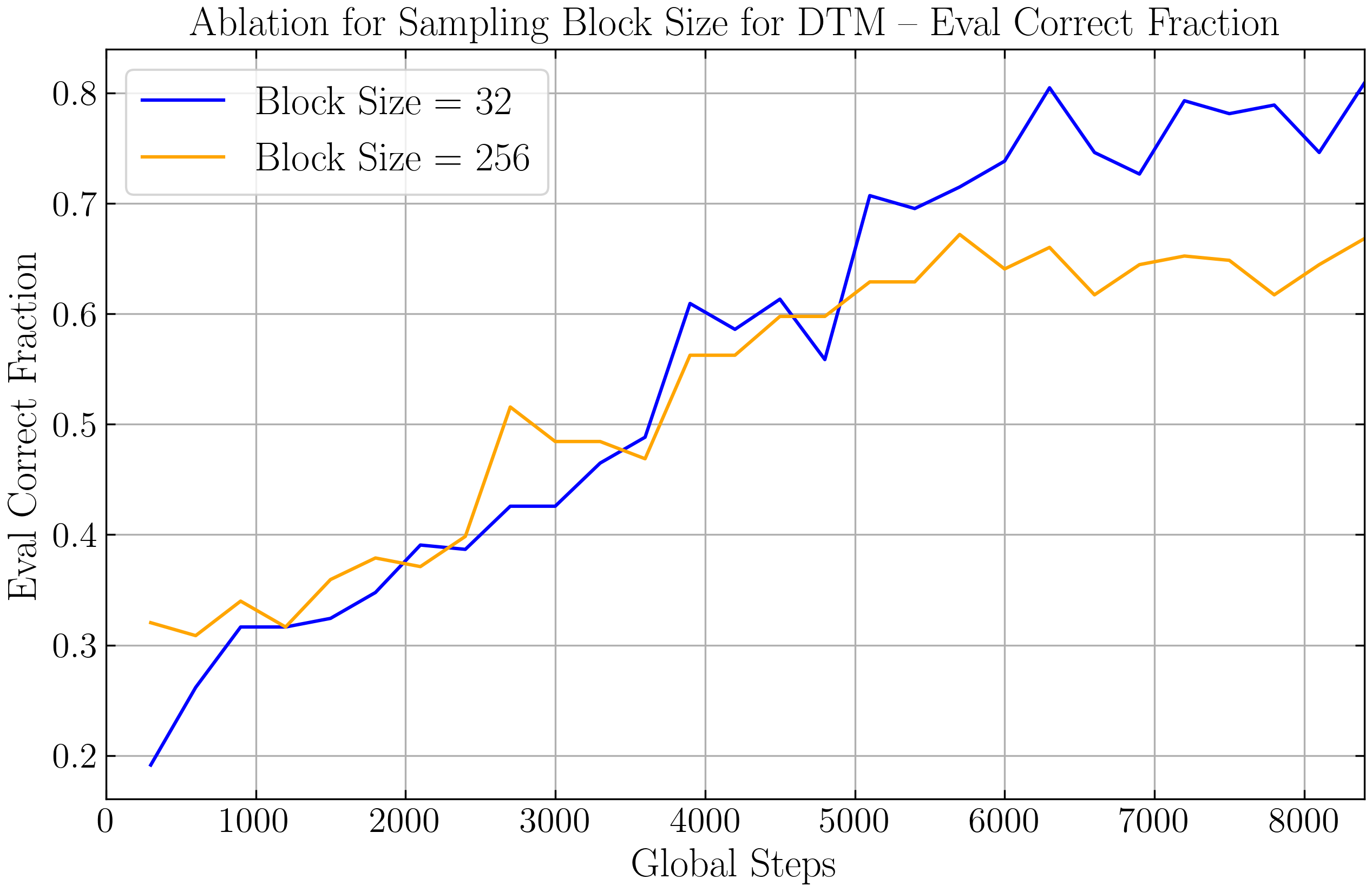}
    \caption{Comparison of performance on Countdown for DTM with random interpolant versus SAR-aligned interpolant.}
    \label{fig:sar_ablation}
\end{figure*}

\end{document}